\documentclass[11pt,letterpaper]{mystyle}

\newif\ifshowchanges
\showchangesfalse
\newcommand{\change}[1]{\ifshowchanges\textcolor{blue}{#1}\else#1\fi}

\usepackage{bbm}
\tcbuselibrary{skins}
\providecommand{\mathbbm}[1]{\mathbb{#1}}
\providecommand{\keywords}[1]{}
\definecolor{ForestGreen}{rgb}{0.13, 0.55, 0.13}
\definecolor{RoyalBlue}{rgb}{0.25, 0.41, 0.88}
\definecolor{cvprpurple}{rgb}{0.6, 0.4, 0.8}

\title{\textit{V2VCrafter: Consistent Street-View Image Generation Across Vehicles}}
\date{\monthyeardate\today}
\renewcommand{\preabstractfigure}{
\begin{figure}[H]
    \centering
    \includegraphics[width=1\textwidth]{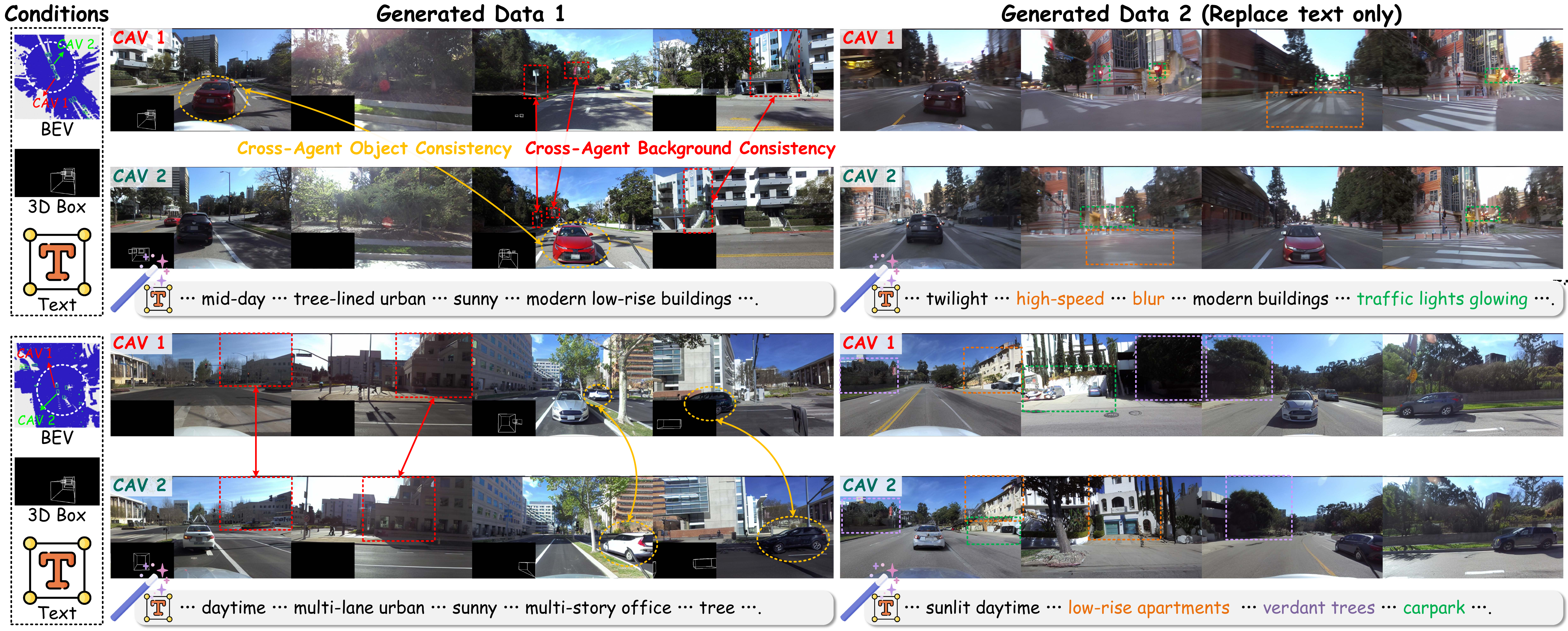}
    \caption{We present \texttt{V2VCrafter}, a street-view image generation framework across vehicles. It is capable of generating consistent content for both foreground objects ({\color{orange}orange ellipses}) and background appearance ({\color{red}red rectangles}) across multiple vehicles' perspectives (left), and can perform controllable generation via text commands (right).}
    \label{fig:comparison}
    \vspace{-5mm}
\end{figure}
}

\author{
  Yihang Tao$^{1,2,*}$,
  Yu Guo$^{1,2,*}$,
  Senkang Hu$^{1,2}$,
  Yanan Ma$^{1,2}$,
  Zihan Fang$^{1,2}$,
  Sam Tak Wu Kwong$^{3}$,
  Yuguang Fang$^{1,2}$
\\
\normalfont{
$^1$ Hong Kong JC STEM Lab of Smart City,
$^2$ City University of Hong Kong,
$^3$ Lingnan University
}}

\begin{document}

\begin{abstract}

Connected and autonomous driving (CAD) systems leverage vehicle-to-vehicle (V2V) communication for multi-agent collaborative perception, yet remain constrained by scarce annotated real-world V2V datasets and limited generalization across diverse driving conditions.
While image generation offers a feasible solution for data augmentation, existing single-vehicle multi-view generation frameworks face two key challenges in multi-agent settings: (1) the expanded learning objective degrades generation quality, and (2) dynamic inter-agent variation hinders consistency modeling for physical attributes (e.g., color, category) of jointly observed objects.
To bridge this gap, we propose \texttt{V2VCrafter}, the first framework for generating controllable and realistic multi-view driving images across vehicles.
For effective learning, we develop a progressive multi-agent diffusion model based on a single-agent backbone, using neighboring agents' latent states to progressively guide single-to-multi-agent generation.
To address cross-vehicle inconsistency, we further propose a cross-agent attention module that leverages a collaboration view graph and learnable jointly observed object representations to model dynamic cross-vehicle camera view relationships.
Experiments on real-world V2X-Real dataset show that \texttt{V2VCrafter} generates high-fidelity, controllable, and consistent street views across vehicles, thereby effectively enhancing downstream collaborative 3D object detection tasks.

\end{abstract}
\maketitle
\begingroup
\renewcommand{\thefootnote}{\fnsymbol{footnote}}
\footnotetext[1]{These authors contributed equally.}
\endgroup

\pagestyle{headstyle}
\thispagestyle{empty}

\vspace{-5mm}
\section{Introduction}
\label{sec:intro}

Connected and autonomous driving (CAD) systems enhance driving safety by leveraging Vehicle-to-Vehicle (V2V) communication for Collaborative Perception (CP) among multiple vehicles ~\citep{10979246, fangPACPPriorityAwareCollaborative2024, wei2024editable, tao2025gcpguardedcollaborativeperception, hu2024cpguardmaliciousagentdetection, ma2025sense4flvehicularcrowdsensingenhanced, tao2026learningmutualviewinformation, tao2024directcpdirectedcollaborativeperception}.
While deep learning-based CP models have advanced rapidly, their generalization ability is still heavily constrained by the scale and diversity of available annotated training data. 
Specifically, the acquisition and annotation process is costly and labor-intensive. Different from single-vehicle scenarios~\citep{Sun_2020_CVPR}, data collection from multiple connected and autonomous vehicles (CAVs) requires more significant operational resources for sensor calibration, data synchronization, and precise spatiotemporal alignment, resulting in datasets that are orders of magnitude scarcer~\citep{v2x-real}. 
This inherent scarcity is further exacerbated by the substantial labor required for annotating 3D scenes from multiple viewpoints.
Furthermore, existing datasets exhibit large disparities in training difficulty across multiple attributes, including different traffic scenarios, weather conditions, and object distributions.
Stemming from skewed sample distributions and systematic data collection biases, these imbalances compromise the model's ability to generalize to rare and complex CAD conditions.

To address these challenges, current efforts in CAD domain have primarily relied on data augmentation with simulation platforms~\citep{10161384, 10979246, Pan_2025_CVPR, fang2026agent}. These methods programmatically generate new training scenarios by altering parameters such as agent poses and communication conditions within the simulator like CARLA \citep{Dosovitskiy17}. 
However, these approaches are constrained to virtual environments and thus suffer from a persistent sim-to-real domain gap. Building upon recent advancements in diffusion models for autonomous driving street-view generation~\citep{gao2023magicdrive, wang2024drive}, another feasible solution is image generation-driven data augmentation.
Such methods typically condition the generation process on camera poses and their adjacency relationships, as well as layout information such as bounding boxes and road maps~\citep{li2023gligen, Zheng_2023_CVPR}. By applying a joint initial noise across multiple views, they learn to denoise all perspectives simultaneously to generate a coherent, multi-view scene. 

Despite this impressive progress, the transfer of these single-agent methods to multi-agent settings poses challenges due to inherent differences in system complexity.
The primary obstacles for such an extension are twofold. \ding{182} \textit{First, the expansion of the learning objective.} The increased number of views from multiple agents makes it more difficult to converge (see Appendix~\ref{app:training_objective}), as the shared-noise, one-shot denoising process becomes difficult to optimize when conditioned on a large and variable number of distinct multi-agent viewpoints, leading to degraded generation quality.
\ding{183} \textit{Second, the highly dynamic variations across agents.} Existing models are designed for a fixed intra-agent camera view pairs and are unable to model the dynamic cross-agent camera view relationships that change as CAVs move. This inability to enforce consistency across a dynamic connection of agents makes it hard to generate a coherent global scene, which is critical for CAD.

In response to the challenges outlined, we introduce \texttt{V2VCrafter}, the first framework for realistic and controllable scene generation for CAD across multiple CAVs. Instead of learning to denoise all views from multiple CAVs at once, we propose a progressive multi-agent diffusion model built upon a shared single-agent diffusion backbone. This approach overcomes the challenges on the expansion of the learning objective by reformulating the generation as a two-stage progressive process: it first masters single-agent scene generation and then uses the latent states of neighboring CAVs as reference signals to progressively guide the diffusion from a single-agent to a multi-agent context. To further enhance cross-agent consistency, we design a novel cross-agent attention module. This module effectively models the dynamic spatial relationships between agents via a collaboration view graph and learnable jointly observed object representation, ensuring that the generated views are coherent across agents. Representative results are shown in Fig. \ref{fig:comparison}.

The main contributions of this work are threefold: 
\begin{itemize} 
\item We propose \texttt{V2VCrafter}, the first framework dedicated to realistic, controllable and consistent street view image generation across multiple CAVs.
\item We introduce a novel multi-agent driving diffusion model for effective single-to-multi-agent expansion, along with a cross-agent attention module that ensures consistency across dynamic cross-agent camera views. 
\item Experiments have shown that our method generates high-fidelity and coherent collaborative driving scenes, and effectively boosts the performance of downstream collaborative 3D object detection tasks.
\end{itemize}

\section{Related Work}

\textbf{Diffusion Models for Conditional Generation.}
Diffusion models have shown strong cross-modal generation capabilities, particularly for Text-to-Image (T2I) synthesis \citep{NEURIPS2019_1d72310e, Zhang_2023_ICCV, NEURIPS2022_ec795aea}. Pioneering works such as DALL-E 2 \citep{ramesh2022hierarchicaltextconditionalimagegeneration} and Imagen \citep{NEURIPS2022_ec795aea} leverage text encoders, while Stable Diffusion \citep{rombach2021highresolution} improves efficiency via latent compression. Yet text alone cannot specify precise geometry (e.g., object positions and orientations). Layout-to-Image (L2I) methods therefore introduce explicit structural constraints. LayoutDiff \citep{Zheng_2023_CVPR}, GLIGEN \citep{li2023gligen}, and Neptune-X \citep{guo2025neptune-x} use bounding boxes or masks for precise instance-level control, but treat observers independently and overlook complex multi-observer relationships. Beyond synthesis, diffusion-generated data from geometric annotations can also benefit downstream tasks such as 2D object detection \citep{chen2023geodiffusion, guo2025neptune-x}. Motivated by this, we study L2I diffusion for multi-agent collaborative driving scenes to improve downstream collaborative 3D object detection.

\noindent\textbf{Data Augmentation for CAD.}
To improve robustness under diverse environments, prior work has explored augmenting CAD datasets. V2XP-ASG \citep{10161384} introduces the first framework for automatically generating challenging scenarios by perturbing multiple agents' LiDAR poses in adversarial yet physically plausible ways and rendering them in CARLA. Similarly, V2XVerse \citep{10979246} supports dynamic editing of communication parameters in CARLA-based simulations for systematic robustness evaluation under practical issues such as communication latency and pose estimation errors.
\change{Most recently, TYP~\citep{Pan_2025_CVPR} explores generative models for collaborative perception. However, it targets sparse LiDAR point clouds and relies on simulation datasets (created by CARLA) to learn perspective transfer from single-vehicle data.}
Despite these advances, current techniques rely on simulation-based environments \change{or focus on sparse geometric data}, largely leaving real-world CAD augmentation \change{with dense, photorealistic multi-agent camera images} unexplored.

\noindent\textbf{Multi-Camera Street View Generation.}
For outdoor driving scene generation, BEVGen~\citep{swerdlow2024streetviewimagegenerationbirdseye} uses an auto-regressive transformer with cross-view attention, BEVControl~\citep{yang2023bevcontrolaccuratelycontrollingstreetview} adds cross-view and cross-object attention, MagicDrive~\citep{gao2023magicdrive} and MagicDrive-v2~\citep{gao2025magicdrive-v2} inject cross-view attention into UNet or DiT blocks.
However, these designs target single-agent settings, enforcing consistency across fixed overlapping camera pairs and thus remaining ill-suited to dynamic, unstructured cross-agent camera views. Moreover, jointly denoising a large and variable number of views makes multi-agent driving scene generation harder to train. 

\section{Problem Formulation}
\label{sec:problem_setup}

\textbf{CAD Scene Representation.}
We model a CAD scene with $N$ agents as $\mathcal{A} = \{A_1, \dots, A_N\}$. Each agent $A_i$ has a global position $L_i$ and is equipped with a set of cameras $\mathcal{C}_i = \{C_{i,1}, \dots, C_{i,M_i}\}$. Each camera $C_{i,j}$ has a pose $\mathbf{P}_{i,j} = [\mathbf{K}_{i,j}, \mathbf{R}_{i,j}, \mathbf{t}_{i,j}]$, where $\mathbf{K}_{i,j}$ is the intrinsic matrix, and $(\mathbf{R}_{i,j}, \mathbf{t}_{i,j})$ are the extrinsic parameters (rotation and translation). The shared environment is described by $\mathcal{S} = \{\mathcal{O}, \mathcal{L}\}$, where $\mathcal{O}$ represents the set of 3D objects (each with bounding box and semantic class) visible in the scene, and $\mathcal{L}$ contains global attributes like weather and street background.
To manage the complex spatial relationships among cameras, we define a \textit{collaboration view graph} $\mathcal{G} = (\mathcal{V}, \mathcal{E})$. The vertices $\mathcal{V} = \bigcup_{i=1}^N \mathcal{C}_i$ are the set of all cameras, and the edges $\mathcal{E}$ connect any two cameras with overlapping fields of view (FoV). We perform an edge-disjoint 2-decomposition\footnote{An edge-disjoint k-decomposition of a graph is the partitioning of its edges into k edge-disjoint subgraphs. Here, k=2.} of the graph $\mathcal{G}$ into an intra-agent view graph $\mathcal{G}_{\text{in}}$ and a cross-agent view graph $\mathcal{G}_{\text{cr}}$.
The overall goal is to train a generative model $G$ that synthesizes a set of realistic and consistent images $\{\mathbf{I}_{i,j}\}$ for all cameras, conditioned on this comprehensive scene representation.

\noindent \textbf{Progressive Single-to-multi-agent Training.}
Instead of directly learning multi-agent multi-camera image denoising, we decompose the problem into a two-stage progressive process. This involves first learning single-agent multi-camera denoising, and then learning to guide this process with cross-agent context. This decomposition is realized through a two-stage progressive training with a unified objective:
\begin{equation}
    \mathcal{L} = \mathbb{E}_{z_{i,0}, \epsilon, t} \left[ ||\epsilon - \epsilon_\theta(z_{i,t}, t, \underbrace{\{\mathcal{S}, \mathbf{P}_i, \mathcal{G}_{\text{in}}(i)\}}_{\text{Intra-Agent Info.}}, \underbrace{\kappa_\text{ref}}_{\text{C.A. Ref.}})||^2 \right],
\end{equation}
where $z_{i,t}$ is the noisy latent for agent $A_i$'s views. The training has two stages:

\noindent\textit{Stage I: Intra-agent Multi-conditional Diffusion.} The model is first trained using a simplified version of the objective where the cross-agent reference (C.A. Ref.) term $\kappa_\text{ref}$ is set to $\varnothing$. In this stage, the network learns single-agent multi-camera view generation based solely on its own information: camera poses ($\mathbf{P}_i$), geometric conditions (including text descriptions, 3D bounding boxes, road map, etc.), and the intra-agent view graph ($\mathcal{G}_{\text{in}}(i)$). This ensures generating coherent multi-view images based on basic conditions for an individual agent.

\noindent\textit{Stage II: Cross-agent Context-guided Diffusion.} Building upon the pre-trained model, this stage uses the full objective shown above, introducing the cross-agent reference $\kappa_{\text{ref}} = \{\mathcal{H}_{\text{ref}}(i), \mathcal{G}_{\text{cr}}(i)\}$. This term consists of the latent states of collaborative agents ($\mathcal{H}_{\text{ref}}(i)$) and the cross-agent view graph ($\mathcal{G}_{\text{cr}}(i)$). Crucially, the latent states are obtained by encoding the neighbors' data using the model pre-trained in Stage I. This stage adapts the model to generate consistent views across multiple agents by conditioning on both the neighbors' latent states and their spatial connections. A detailed analysis of why this progressive decomposition improves trainability is provided in Appendix~\ref{app:training_objective}.

\begin{figure*}[t]
    \centering
    \includegraphics[width=1\textwidth]{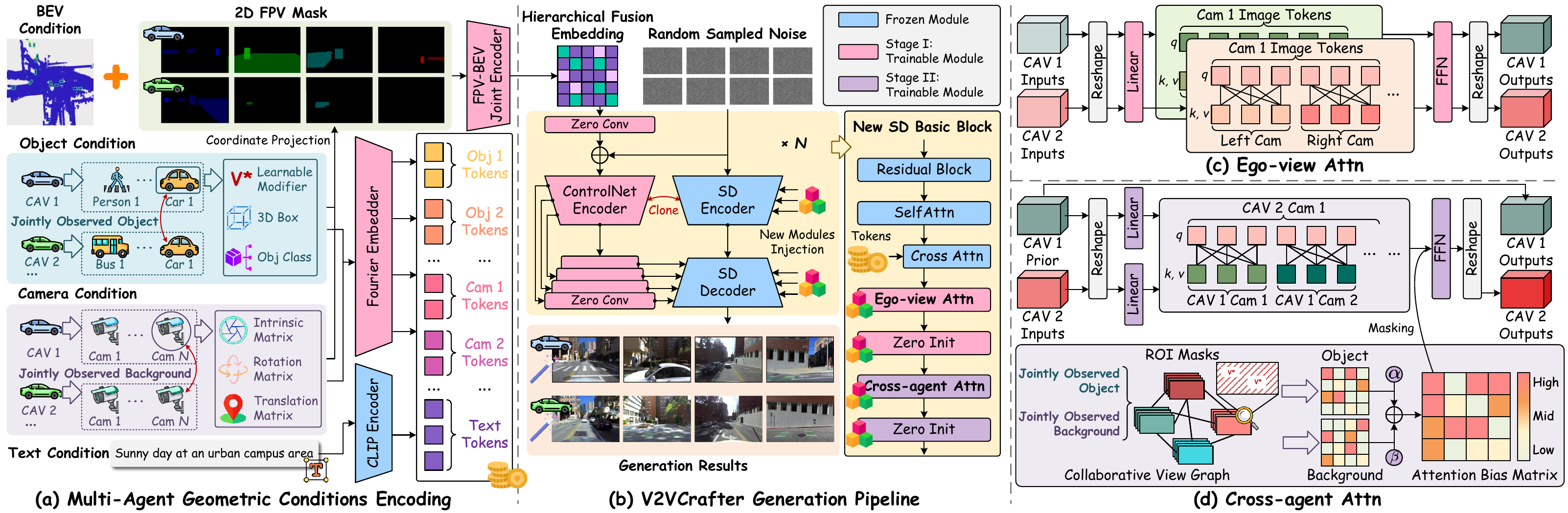}
    \caption{\textbf{Architecture of our \texttt{V2VCrafter}}. (a) Multi-agent geometric conditions encoding with hierarchical FPV-BEV fusion and learnable modifier $\mathbf{V}^*$ for jointly observed objects. (b) Two-stage progressive training pipeline. (c) Ego-view attention for intra-agent consistency. (d) Cross-agent attention with collaboration view graph and ROI masking for cross-agent consistency.}
    \label{fig:method}
    \vspace{-5mm}
\end{figure*}

\section{V2VCrafter}
\label{sec:method}

The architecture of our proposed \texttt{V2VCrafter} is illustrated in Fig.~\ref{fig:method}. It features a two-stage progressive learning framework and a cross-agent attention module for realistic and controllable multi-agent driving scene generation. This section details the key components of our approach.

\subsection{Multi-agent Geometric Conditions Encoding}
\label{sec:conditioning}
As shown in Fig.~\ref{fig:method}a, the multi-agent geometric condition encoding strategy is used to help the model better understand the complexities of CAD. For scene-level encoding, we retain strategies similar to those in single-agent autonomous driving. Specifically, we use the CLIP encoder for text descriptions and represent camera poses ($\mathbf{P}$) using Fourier features to provide global context. However, to respond to the unique challenges of the multi-agent setting, we introduce two additional designs.

\noindent \textbf{Shared Object Encoding with Learnable Modifier.} For each agent $i$, we represent the object condition as a sequence $\mathcal{O}_i = \{(\mathbf{b}_k, l_k)\}_{k=1}^{K_i}$, where $\mathbf{b}_k \in \mathbb{R}^{8 \times 3}$ encodes the 3D bounding box corners and $l_k$ is the semantic class. To explicitly distinguish jointly observed objects (the same 3D object observed by multiple agents at the same time), we introduce a learnable modifier embedding $\mathbf{V}^* \in \mathbb{R}^{d}$ that is additively combined with class embeddings of jointly observed objects. Let $\mathcal{A}_k$ denote the set of agents observing object $k$, the enriched class embedding becomes:
\begin{equation}
    \tilde{\mathbf{l}}_k = \mathbf{l}_k + \mathbbm{1}_{|\mathcal{A}_k|>1} \cdot \mathbf{V}^*,
\end{equation}
where $\mathbf{l}_k$ is the CLIP-initialized class embedding and $\mathbbm{1}_{|\mathcal{A}_k|>1}$ is an indicator function for jointly observed objects. The final object token is computed as $\mathbf{e}_k = \text{MLP}([\mathcal{F}(\mathbf{b}_k); \tilde{\mathbf{l}}_k])$, where $\mathcal{F}$ denotes Fourier positional encoding \citep{hua2025fourier}. We embed $\mathbf{V}^*$ as a learnable parameter within the trainable bounding box encoder of the ControlNet module \citep{Zhang_2023_ICCV}. These object tokens are then concatenated with text and camera embeddings to form the encoder hidden states, which are injected into the frozen UNet via its cross-attention layers, as shown in Fig.~\ref{fig:method}b. Unlike textual inversion~\citep{kumari2022customdiffusion} requiring trainable cross-attention parameters, our approach enables $\mathbf{V}^*$ to be optimized through backpropagation as gradients flow from the generation loss through the frozen cross-attention layers back to the trainable bounding box encoder. This design allows $\mathbf{V}^*$ to modulate jointly observed object representations while maintaining full compatibility with the frozen Stable Diffusion backbone, providing semantic-level control that complements the spatial-level guidance introduced below.

\noindent \textbf{Hierarchical FPV-BEV Joint Encoder.} Implicit 3D box encoding with Bird's-Eye-View (BEV) maps \cite{gao2023magicdrive} suffers from limited precision in object localization and poor convergence under data-constrained settings, making it less suitable for our multi-agent driving scenarios. To address this, we introduce explicit per-camera First-Person-View (FPV) semantic masks that provide direct spatial supervision and work jointly with global BEV maps in a hierarchical encoding scheme.
For each camera $C_{i,j}$, we generate FPV masks by projecting object bounding boxes onto the image plane. Given a 3D box $\mathbf{b} = (\mathbf{p}, \mathbf{s}, \theta)$ with center $\mathbf{p} \in \mathbb{R}^3$, size $\mathbf{s} \in \mathbb{R}^3$, and yaw $\theta$, its corner points $\{\mathbf{v}_k\}_{k=1}^8$ are projected to 2D coordinates $\mathbf{u}_k = [x_1/x_3, x_2/x_3]^\top$ using camera intrinsics $\mathbf{K}_{i,j}$ and extrinsics $[\mathbf{R}_{i,j} | \mathbf{t}_{i,j}]$, where the homogeneous coordinates are calculated as:
\begin{equation}
    [x_1, x_2, x_3]^\top = \mathbf{K}_{i,j} (\mathbf{R}_{i,j} \mathbf{v}_k + \mathbf{t}_{i,j}).
\end{equation}
The resulting vector $[x_1, x_2, x_3]^\top$ represents the point in homogeneous image coordinates, where $x_1, x_2$ are the pixel coordinates and $x_3$ is the depth.
The projected polygon is rasterized using convex hull operations to form a per-camera mask $\mathbf{M}_{i,j} \in \mathbb{R}^{H \times W}$, where each pixel encodes the object semantic class.

To integrate complementary information from global road structure and local object placement, we employ dual encoders $\mathcal{E}_{\text{BEV}}$ and $\mathcal{E}_{\text{FPV}}$ that extract features at different semantic levels. Let $\mathbf{B} \in \mathbb{R}^{C_b \times H_b \times W_b}$ denote the BEV map and $\mathbf{M} = \{\mathbf{M}_{i,j}\}$ denote the set of all per-camera masks:
\begin{equation}
    \mathbf{F}_{\text{BEV}} = \mathcal{E}_{\text{BEV}}(\mathbf{B}), \quad \mathbf{F}_{\text{FPV}} = \mathcal{E}_{\text{FPV}}(\mathbf{M}).
\end{equation} The dual latent are fused via a weighted residual connection:
\begin{equation}
    \mathbf{F}_{\text{cond}} = \mathbf{F}_{\text{FPV}} + \eta \cdot \mathbf{F}_{\text{BEV}},
\end{equation}
where $\eta$ controls the contribution of BEV features. Although FPV masks and BEV layouts are defined in different coordinate systems, the two encoders first project them into a shared latent space before fusion. FPV mask provides precise object placement, while BEV map supplements global road context. As shown in Appendix~\ref{app:ablation}, a small residual weight works better. The fused features are then injected into the UNet decoder blocks.

\subsection{Ego-view Attention}
\label{sec:pretraining}
In the Stage I training, we establish a robust single-agent multi-camera generative model, as shown in Fig.~\ref{fig:method}c. This model leverages the ego-agent's camera arrangement, which defines a static intra-agent view graph $\mathcal{G}_{\text{in}}(i) = (\mathcal{V}_i, \mathcal{E}_{\text{in}})$. Here, $\mathcal{V}_i = \mathcal{C}_i$ are the camera vertices, and the edges $\mathcal{E}_{\text{in}} \subseteq \mathcal{V}_i \times \mathcal{V}_i$ encode the fixed spatial adjacency based on FoV overlap. 
For each target camera $c \in \mathcal{V}_i$, its neighborhood is defined by the graph structure: $\mathcal{N}_{\text{in}}(c) = \{c' \in \mathcal{V}_i \mid (c, c') \in \mathcal{E}_{\text{in}}\}$. Given the latent feature representation $\mathbf{f}_c \in \mathbb{R}^{L \times D}$ for camera $c$, we perform a \textit{static message passing} process via a neighbor attention mechanism to facilitate information flow. The feature aggregation for camera $c$ is formulated as:
\begin{equation}
    \mathbf{f}'_c = \mathbf{f}_c + \sum_{j \in \mathcal{N}_{\text{in}}(c)} \mathbf{A}(c, j) \odot \phi_V(\mathbf{f}_j),
\end{equation}
where $\phi_V: \mathbb{R}^{L \times D} \to \mathbb{R}^{L \times D}$ is a learnable value projection, and the attention coefficient $\mathbf{A}(c, j) \in \mathbb{R}^{L \times L}$ is computed:
\begin{equation}
    \mathbf{A}(c, j) = \text{Softmax}\left(\frac{\phi_Q(\mathbf{f}_c) \cdot \phi_K(\mathbf{f}_j)^\top}{\sqrt{d_k}}\right),
\end{equation}
with $\phi_Q, \phi_K: \mathbb{R}^{L \times D} \to \mathbb{R}^{L \times d_k}$ being query and key projections, respectively. 
This design enables a static information flow along the fixed edges of $\mathcal{G}_{\text{in}}$, enforcing spatial coherence across neighboring views. The pre-defined sparse connectivity ensures computation efficiency while capturing the essential geometric constraints of the single-agent setup. The updated features $\{\mathbf{f}'_c\}_{c \in \mathcal{V}_i}$ are then processed by the UNet decoder to generate multi-view images for agent $i$.

\subsection{Cross-agent Attention}
\noindent\textbf{Dynamic Collaboration View Graph Construction.}
The complete collaboration view graph $\mathcal{G} = (\mathcal{V}, \mathcal{E})$ comprises two disjoint subgraphs: the static intra-agent view graph $\mathcal{G}_{\text{in}}$ (used in the ego-view attention) and the dynamic cross-agent view graph $\mathcal{G}_{\text{cr}}$, where $\mathcal{V} = \bigcup_{i=1}^N \mathcal{C}_i$ and $\mathcal{E} = \mathcal{E}_{\text{in}} \cup \mathcal{E}_{\text{cr}}$. 
To enable our cross-agent attention, we first construct $\mathcal{G}_{\text{cr}} = (\mathcal{V}, \mathcal{E}_{\text{cr}})$ where $\mathcal{E}_{\text{cr}} \subseteq \mathcal{V} \times \mathcal{V}$ captures inter-agent spatial relationships. Let $\mathcal{A}(c)$ denote the agent owning camera $c$, and $\mathcal{O}_{c}$ the set of visible object IDs in $c$. We define auxiliary predicates: $\Delta(c_i, c_j) := \mathbbm{1}_{\mathcal{A}(c_i) \neq \mathcal{A}(c_j)}$ for cross-agent constraint, and $\Omega(c_i, c_j) := \mathbbm{1}_{|\mathcal{F}_{c_i} \cap \mathcal{F}_{c_j}| > 0}$ for geometric overlap where $\mathcal{F}_{c}$ denotes camera $c$'s frustum footprint on the BEV plane. We construct $\mathcal{E}_{\text{cr}}$ through two steps:

\noindent \textit{Step I: Object-based Connectivity Analysis.} We first establish connections between cameras that share at least one jointly observed object, as these views explicitly exhibit strong semantic correlation and spatial overlap. This stage leverages object ID annotations commonly available in CAD datasets to identify camera pairs that observe the same entities:
\begin{equation}
    E_{\text{obj}} = \{ (c_i, c_j) \mid \Delta(c_i, c_j) \cdot \mathbbm{1}_{\mathcal{O}_{c_i} \cap \mathcal{O}_{c_j} \neq \emptyset} = 1 \}.
\end{equation}

\noindent \textit{Step II: Geometric Overlap Reasoning.} For camera pairs not connected in Step I, we perform geometric reasoning by projecting their viewing frustums onto the BEV plane. This stage captures spatial relationships between cameras that may not observe common objects but still have overlapping FoV, which is crucial for maintaining global spatial coherence:
\begin{equation}
    E_{\text{geo}} = \{ (c_i, c_j) \notin E_{\text{obj}} \mid \Delta(c_i, c_j) \cdot \Omega(c_i, c_j) = 1 \}.
\end{equation}
The final edge set is $\mathcal{E}_{\text{cr}} = E_{\text{obj}} \cup E_{\text{geo}}$, yielding adjacency matrix $\mathbf{W} \in \{0,1\}^{|\mathcal{V}| \times |\mathcal{V}|}$ with $W_{ij} = \mathbbm{1}_{(c_i, c_j) \in \mathcal{E}_{\text{cr}}}$.

\noindent\textbf{Jointly Observed ROI Masking.}
To enforce object-level consistency for jointly observed entities, we introduce a structured attention bias that guides cross-agent feature fusion. For each connected camera pair $(c_i, c_j) \in \mathcal{E}_{\text{cr}}$, we partition the token space into three categories based on their spatial correspondence to 3D bounding boxes and object visibility: (1) $\mathbf{V}^*$ object tokens corresponding to the shared objects marked by the learnable modifier $\mathbf{V}^*$, which require strict cross-agent appearance consistency; (2) non-$\mathbf{V}^*$ object tokens representing vehicle-specific objects visible only to one agent; and (3) background tokens covering road, buildings, and sky regions. 
We construct a dual-component attention bias matrix $\mathbf{M}_{ji} \in \mathbb{R}^{L \times L}$ that differentially regulates attention flow across token types:
\begin{equation}
    \mathbf{M}_{ji} = \alpha \cdot \mathbf{S}_{ji} + \beta \cdot \mathbf{B}_{ji},
\end{equation}
where $\mathbf{S}_{ji}$ and $\mathbf{B}_{ji}$ serve different roles, respectively: $\mathbf{S}_{ji}$ uses strong suppression ($-\tau_o$) to isolate $\mathbf{V}^*$ objects, ensuring each shared object only attends to its corresponding instance in other vehicles while blocking attention to unrelated objects; $\mathbf{B}_{ji}$ applies weaker suppression ($-\tau_b$) to regularize background and non-$\mathbf{V}^*$ tokens. 

We define the following indicator functions: $\mathbbm{1}_s(k,l)$ indicates whether tokens $k$ and $l$ represent the same shared object instance; $\mathbbm{1}_v(k)$ indicates $\mathbf{V}^*$ membership; $\mathbbm{1}_d(k,l)$ indicates whether tokens $k$ and $l$ correspond to different objects or spatial regions (i.e., they should not strongly attend to each other); and $\mathbbm{1}_{v\oplus}(k,l) = \mathbbm{1}_v(k) \oplus \mathbbm{1}_v(l)$ (XOR operation) captures the mixed $\mathbf{V}^*$/non-$\mathbf{V}^*$ pairs. The attention bias components are denoted as:
\begin{equation}
\begin{aligned}
    S_{ji}^{(k,l)} &= -\tau_o \cdot [\mathbbm{1}_{v\oplus}(k,l) + \mathbbm{1}_v(k)\mathbbm{1}_v(l)(1 - \mathbbm{1}_s(k,l))], \\
    B_{ji}^{(k,l)} &= -\tau_b \cdot (1-\mathbbm{1}_v(k))(1-\mathbbm{1}_v(l)) \mathbbm{1}_d(k,l),
\end{aligned}
\end{equation}
where $\tau_o \gg \tau_b > 0$ ensures $\mathbf{V}^*$ objects receive stronger attention control than background regions. The term $\mathbbm{1}_{v\oplus}(k,l)$ creates a bidirectional barrier preventing information leakage between $\mathbf{V}^*$ and non-$\mathbf{V}^*$ tokens, while the term $\mathbbm{1}_v(k)\mathbbm{1}_v(l)(1-\mathbbm{1}_s(k,l))$ blocks attention between different $\mathbf{V}^*$ objects. The learnable scalars $\alpha$, and $\beta$ balance the object-level consistency and the scene-level coherence.

\noindent\textbf{Cross-agent Masked Attention.}
To account for the evolving cross-agent camera view relationships, we perform \textit{dynamic message passing} on the graph $\mathcal{G}_{\text{cr}}$. For an ego agent's camera $c_j$ with features $\mathbf{f}_j \in \mathbb{R}^{L \times D}$, we aggregate information from its dynamically-determined cross-agent neighbors $\mathcal{N}_{\text{cr}}(c_j) = \{c_i \mid (c_j, c_i) \in \mathcal{E}_{\text{cr}}\}$ using a masked attention mechanism:
\begin{equation}
    \mathbf{f}''_j = \sum_{i \in \mathcal{N}_{\text{cr}}(c_j)} \text{Softmax}\left(\frac{\phi_Q(\mathbf{f}_j) \cdot \phi_K(\mathbf{f}_i)^\top}{\sqrt{d_k}} + \mathbf{M}_{ji}\right) \phi_V(\mathbf{f}_i).
\end{equation}
The attention bias $\mathbf{M}_{ji}$ modulates the attention distribution to prioritize tokens corresponding to shared objects (associated with $\mathbf{V}^*$ tokens at the semantic level), thereby establishing a multi-level consistency enforcement mechanism: the $\mathbf{V}^*$ token modulates object semantics through cross-attention in the UNet, while the ROI mask guides spatial feature alignment in the graph attention layer. The enhanced features $\mathbf{f}''_j$ are then refined through a feed-forward network:
\begin{equation}
    \tilde{\mathbf{f}}_j = \text{FFN}(\text{LayerNorm}(\mathbf{f}_j + \mathbf{f}''_j)),
\end{equation}
enabling the model to synthesize views that are geometrically and semantically consistent across agents. The overall process of this attention is shown in Fig.~\ref{fig:method}d.

\subsection{Model Training and Data Selection}

Our training strategy progressively operationalizes the unified objective function defined in Section~\ref{sec:problem_setup}.
We first train the denoising model $\epsilon_\theta$ with only the Ego-view Attention module activated, then freeze the ego-centric layers and finetune the model with the Cross-agent Attention, guided by the added cross-agent reference condition $\kappa_{\text{ref}}$. We further use high-quality data selection to prioritize challenging samples and selective classifier-free guidance to balance appearance flexibility with layout fidelity. See Appendix~\ref{app:impl_details} for details.

\begin{figure*}[t]
    \centering
    \includegraphics[width=0.96\textwidth]{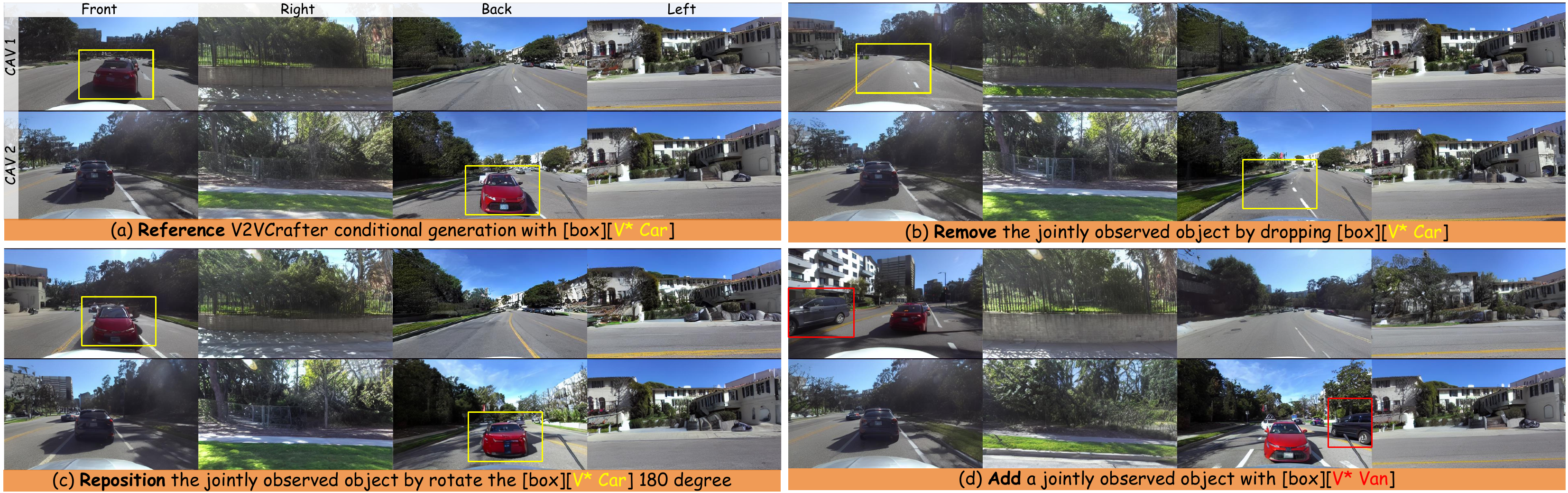} 
    \caption{\textbf{Cross-agent controllability of \texttt{V2VCrafter}}. We show generation results under diverse joint observation conditions. Rectangles with the same color denote the same observed area.}
    \label{fig:shared_object_control}
    \vspace{-4mm}
\end{figure*}

\begin{figure*}[t]
    \centering
    \includegraphics[width=0.96\textwidth]{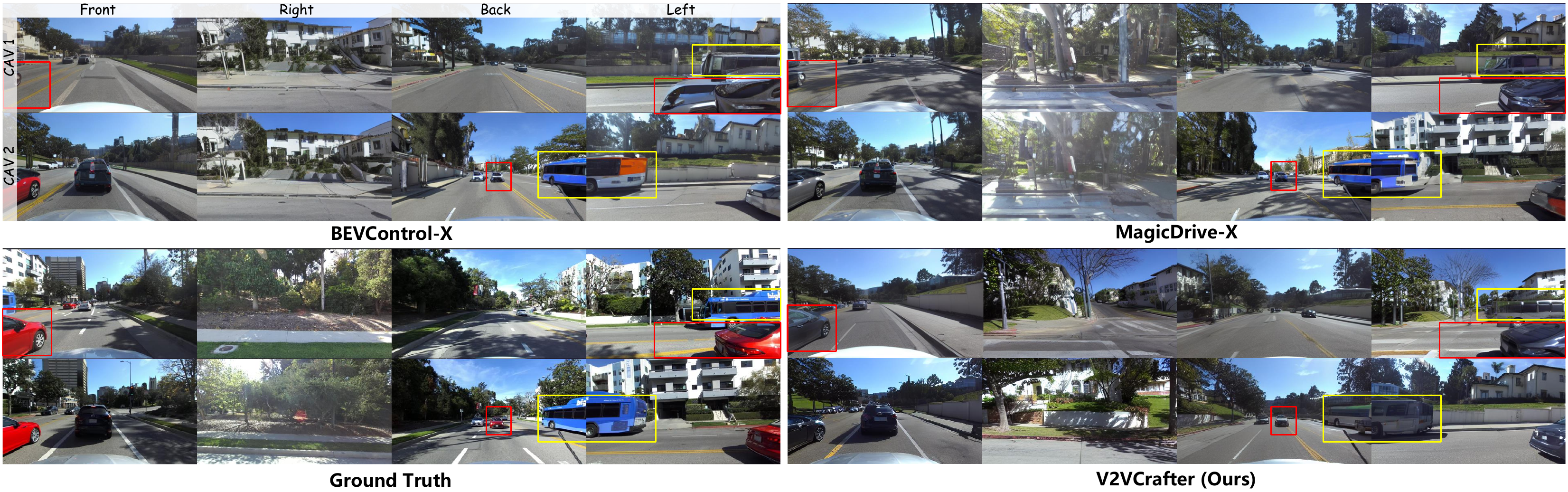}
    \vspace{-3mm}
    \caption{\textbf{Qualitative comparison on V2X-Real.} Rectangles with the same color denote the same observed area. Baselines fail to keep consistent appearance for jointly observed objects, while \texttt{V2VCrafter} maintains coherent appearance and geometry.} 
    \label{fig:gen_baselines}
    \vspace{-2mm}
\end{figure*}

\section{Experiments}

\subsection{Experimental Setup}

\textbf{Dataset.} We train and evaluate on V2X-Real dataset~\citep{v2x-real}, filtering scenes with at least two CAVs, which yields 11,808 training and 2,624 validation samples. We also generate BEV maps with OpenCOOD~\citep{xuOPV2VOpenBenchmark2022} and text descriptions with Qwen2.5-VL-7B-Instruct~\citep{bai2025qwen25vltechnicalreport}. The filtered dataset is further divided into Basic, Moderate, and Hard subsets by object distance, occlusion, and road slope, with ratios of 28.9\%, 24.4\%, and 46.7\%, respectively. The Basic split is used for image-generation evaluation, while all three splits are separately used for data-augmentation experiments. See Appendix~\ref{app:data_preprocessing} for dataset preprocessing details.

\begin{table}[t]
    \centering
    \caption{\textbf{Synthesized image evaluation.} \colorbox{red!20}{\rule{0pt}{0.7ex}\hspace{0.8em}} and \colorbox{yellow!20}{\rule{0pt}{0.7ex}\hspace{0.8em}} mark the best and second-best results.}
    \vspace{-1mm}
    \label{tab:main_results}
    \footnotesize
    \setlength{\tabcolsep}{3.5pt}
    \renewcommand{\arraystretch}{0.80}
    \resizebox{\linewidth}{!}{
    \begin{tabular}{lcccc}
        \toprule
        \multirow{2}{*}{\textbf{Method}} & \multicolumn{1}{c}{\textbf{Fidelity}} & \multicolumn{1}{c}{\textbf{Cross-agent Consistency}} & \multicolumn{2}{c}{\textbf{CoDetection Score (mAP@30/50)}} \\
        \cmidrule(lr){2-2} \cmidrule(lr){3-3} \cmidrule(lr){4-5}
         & \textbf{FID}$\downarrow$ & \textbf{Sim. / MRR / Top-1}$\uparrow$ & \textbf{Overall}$\uparrow$ & \textbf{Jointly Observed}$\uparrow$ \\
        \midrule
        \multicolumn{5}{l}{\textbf{Reference}} \\
        \textcolor{gray}{GT Ref. (Real)} & \textcolor{gray}{--} & \textcolor{gray}{0.813 / 0.656 / 0.512} & \textcolor{gray}{29.18 / 13.57} & \textcolor{gray}{33.96 / 20.15} \\
        \midrule
        \multicolumn{5}{l}{\textbf{Generative Methods}} \\
        BEVControl-X & 22.06 & \change{0.621 / 0.265 / 0.062} & 20.11 / 10.98 & 24.42 / 12.36 \\
        MagicDrive-X & \cellcolor{yellow!20}18.41 & 0.649 / 0.348 / 0.131 & \cellcolor{yellow!20}24.89 / 11.95 & \cellcolor{yellow!20}26.73 / 16.02 \\
        \change{MagicDrive-V2-X} & \change{20.91} & \cellcolor{yellow!20}\change{0.654 / 0.370 / 0.161} & 24.18 / 11.43 & 27.11 / 15.54 \\
        \textbf{V2VCrafter (Ours)} & \cellcolor{red!20}\textbf{17.46} & \cellcolor{red!20}\textbf{0.836 / 0.586 / 0.406} & \cellcolor{red!20}\textbf{27.88 / 13.21} & \cellcolor{red!20}\textbf{31.01 / 18.46} \\
        \bottomrule
    \end{tabular}}
    \vspace{-4mm}
\end{table}

\begin{table*}[t]
    \centering
    \caption{\textbf{Camera-only augmentation results.} We report Vehicle / Overall collaborative detection score on Basic / Moderate / Hard subsets. \colorbox{red!20}{\rule{0pt}{0.7ex}\hspace{0.8em}} and \colorbox{yellow!20}{\rule{0pt}{0.7ex}\hspace{0.8em}} mark the largest and second-largest gains among \textit{Real + Ours} entries in each column.}
    \label{tab:aug_cp_camera_only}
    \vspace{-3mm}
    \footnotesize
    \renewcommand{\arraystretch}{1.0}%
    \setlength{\tabcolsep}{4.5pt}
    \resizebox{\textwidth}{!}{
    \begin{tabular}{ll|ccc|ccc}
        \toprule
        \multirow{2}{*}{\textbf{CP Model}} & \multirow{2}{*}{\textbf{Training Data}} & \multicolumn{3}{c}{\textbf{Vehicle AP@30/50}} & \multicolumn{3}{c}{\textbf{Overall mAP@30/50}} \\
        \cmidrule(lr){3-5} \cmidrule(l){6-8}
        & & \textbf{Basic} & \textbf{Moderate} & \textbf{Hard} & \textbf{Basic} & \textbf{Moderate} & \textbf{Hard} \\
        \midrule
        \multirow{2}{*}{AttnFuse} & Real Only & 34.21/16.48 & 16.89/14.33 & 14.67/12.44 & 29.18/13.57 & 12.54/10.04 & 9.58/7.21 \\
         & Real + Ours & 38.68/19.84 {\scriptsize \textcolor{green!60!black}{(+4.47/+3.36)}} & \cellcolor{yellow!20}19.83/17.59 {\scriptsize \textcolor{green!60!black}{(+2.94/+3.26)}} & \cellcolor{yellow!20}16.21/11.66 {\scriptsize \textcolor{green!60!black}{(+1.54)}/\textcolor{BrickRed}{(-0.78)}} & 32.74/16.12 {\scriptsize \textcolor{green!60!black}{(+3.56/+2.55)}} & \cellcolor{yellow!20}14.97/12.18 {\scriptsize \textcolor{green!60!black}{(+2.43/+2.14)}} & \cellcolor{red!20}10.11/8.93 {\scriptsize \textcolor{green!60!black}{(+0.53/+1.72)}} \\
        \midrule
        \multirow{2}{*}{V2VNet} & Real Only & 37.86/18.23 & 18.21/15.16 & 16.43/13.88 & 31.64/15.22 & 13.42/10.61 & 10.71/8.04 \\
         & Real + Ours & \cellcolor{yellow!20}42.17/22.49 {\scriptsize \textcolor{green!60!black}{(+4.31/+4.26)}} & \cellcolor{red!20}21.82/18.67 {\scriptsize \textcolor{green!60!black}{(+3.61/+3.51)}} & \cellcolor{red!20}18.39/14.51 {\scriptsize \textcolor{green!60!black}{(+1.96/+0.63)}} & \cellcolor{yellow!20}35.27/17.94 {\scriptsize \textcolor{green!60!black}{(+3.63/+2.72)}} & \cellcolor{red!20}16.98/13.74 {\scriptsize \textcolor{green!60!black}{(+3.56/+3.13)}} & 9.98/8.71 {\scriptsize \textcolor{BrickRed}{(-0.73)}/\textcolor{green!60!black}{(+0.67)}} \\
        \midrule
        \multirow{2}{*}{Late Fusion} & Real Only & 25.19/14.56 & 13.53/11.89 & 11.27/10.13 & 20.63/10.47 & 9.88/8.31 & 7.61/6.28 \\
         & Real + Ours & \cellcolor{red!20}29.81/18.63 {\scriptsize \textcolor{green!60!black}{(+4.62/+4.07)}} & 15.86/13.61 {\scriptsize \textcolor{green!60!black}{(+2.33/+1.72)}} & 10.86/10.69 {\scriptsize \textcolor{BrickRed}{(-0.41)}/\textcolor{green!60!black}{(+0.56)}} & \cellcolor{red!20}24.36/13.58 {\scriptsize \textcolor{green!60!black}{(+3.73/+3.11)}} & 12.11/10.05 {\scriptsize \textcolor{green!60!black}{(+2.23/+1.74)}} & \cellcolor{yellow!20}8.19/6.11 {\scriptsize \textcolor{green!60!black}{(+0.58)}/\textcolor{BrickRed}{(-0.17)}} \\
        \bottomrule
    \end{tabular}}
    \vspace{-2mm}
\end{table*}

\begin{figure*}[htbp]
    \centering
    \includegraphics[width=1.0\textwidth]{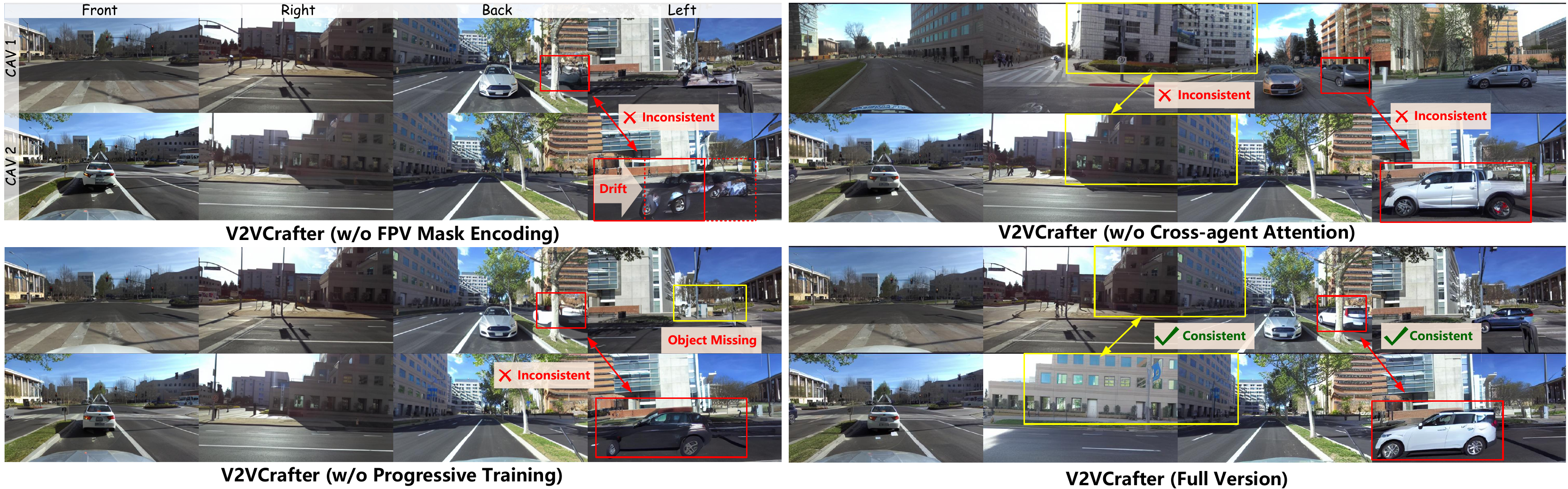}
    \vspace{-7mm}
    \caption{\textbf{Qualitative ablation study on key components}. Rectangles with the same color denote the same observed area.}
    \label{fig:ablation_qualitative}
    \vspace{-5mm}
\end{figure*}

\begin{table}[t]
    \centering
    \caption{\textbf{Ablation on image generation quality.} \colorbox{red!20}{\rule{0pt}{0.7ex}\hspace{0.8em}} and \colorbox{yellow!20}{\rule{0pt}{0.7ex}\hspace{0.8em}} mark the best and second-best results.}
    \vspace{-2mm}
    \label{tab:ablation_main}
    \footnotesize
    \renewcommand{\arraystretch}{0.80}
    \setlength{\tabcolsep}{3.4pt}
    \resizebox{\linewidth}{!}{
    \begin{tabular}{lcccc}
        \toprule
        \multirow{2}{*}{\textbf{Variant}} & \multicolumn{1}{c}{\textbf{Fidelity}} & \multicolumn{1}{c}{\textbf{Cross-agent Consistency}} & \multicolumn{2}{c}{\textbf{CoDetection Score (mAP@30/50)}} \\
        \cmidrule(lr){2-2} \cmidrule(lr){3-3} \cmidrule(lr){4-5}
         & \textbf{FID}$\downarrow$ & \textbf{Sim. / MRR / Top-1}$\uparrow$ & \textbf{Overall}$\uparrow$ & \textbf{Jointly Observed}$\uparrow$ \\
        \midrule
        \multicolumn{5}{l}{\textbf{Reference}} \\
        \textcolor{gray}{GT Ref. (Real)} & \textcolor{gray}{--} & \textcolor{gray}{0.813 / 0.656 / 0.512} & \textcolor{gray}{29.18 / 13.57} & \textcolor{gray}{33.96 / 20.15} \\
        \midrule
        \multicolumn{5}{l}{\textbf{Method Variants}} \\
        w/o CA-Attn. & 18.01 & 0.624 / 0.301 / 0.108 & 24.38 / 11.54 & 26.67 / 15.20 \\
        w/o $\mathbf{V}^*$ Embedding& \cellcolor{yellow!20}17.88 & \cellcolor{yellow!20}0.815 / 0.484 / 0.337 & \cellcolor{yellow!20}26.01 / 12.48 & \cellcolor{yellow!20}30.63 / 17.84 \\
        w/o FPV Mask & 28.26 & 0.582 / 0.247 / 0.071 & 18.42 / 10.96 & 21.76 / 11.42 \\
        w/o Prog. Training & 20.33 & 0.659 / 0.412 / 0.185 & 24.95 / 11.82 & 30.93 / 17.02 \\
        Full Model & \cellcolor{red!20}\textbf{17.46} & \cellcolor{red!20}\textbf{0.836 / 0.586 / 0.406} & \cellcolor{red!20}\textbf{27.88 / 13.21} & \cellcolor{red!20}\textbf{31.01 / 18.46} \\
        \bottomrule
    \end{tabular}}
    \vspace{-5mm}
\end{table}

\noindent \textbf{Implementation Details and Baselines.} Our model builds upon Stable Diffusion v1.5 and is trained on 8 NVIDIA A800 GPUs for 150 epochs. We use a learning rate of 1e-4, a batch size of 4 per GPU, and a CFG scale of 1.5. 
\change{For multi-agent camera image generation, we adapt MagicDrive-V2~\citep{gao2025magicdrive-v2}, MagicDrive~\citep{gao2023magicdrive}, and BEVControl~\citep{yang2023bevcontrolaccuratelycontrollingstreetview} to the multi-agent generation setting with static intra-agent cross-view attention, denoted as MagicDrive-V2-X, MagicDrive-X, and BEVControl-X.}
For the downstream collaborative 3D object detection task, we employ BEVFusion~\citep{liu2022bevfusion} as the base detector and AttnFuse~\citep{xuOPV2VOpenBenchmark2022} for collaborative fusion. 
For data augmentation experiments, we additionally evaluate CP models including V2VNet~\citep{v2vnet} and Late Fusion~\citep{11020620}.
More details are in Appendix~\ref{app:impl_details}.

\noindent \textbf{Evaluation Metrics.} We use FID to measure generation realism, and $\text{mAP}@{30/50}$ to assess collaborative detection score on overall and jointly observed objects. \change{To evaluate cross-agent consistency of jointly observed objects, we report three CLIP-based retrieval metrics \citep{pmlr-v139-radford21a}: CLIP Similarity for pairwise appearance agreement, and MRR / Top-1 Accuracy for cross-agent retrieval correctness.} For more details of the metrics, see Appendix~\ref{app:exp_results}.

\subsection{Image Generation Results}

\noindent \textbf{Cross-agent Controllability.} Beyond the basic scene-level controls shown in Fig.~\ref{fig:comparison}, we further test whether \texttt{V2VCrafter} can generate jointly observed objects while preserving cross-agent consistency. As shown in Fig.~\ref{fig:shared_object_control}, the model supports consistent object-level control, including changing the presence or position of jointly observed objects, without breaking object and scene coherence across CAVs. This behavior indicates that \texttt{V2VCrafter} enables controllable generation of jointly observed objects under multi-agent geometric constraints. Such controllability can further be used to adjust the ratio of jointly observed objects in synthesized data, whose impact on downstream collaborative perception is analyzed in Appendix~\ref{app:exp_results}.

\noindent \textbf{Baseline Comparison.} \change{We replace real validation images with synthesized ones that follow the same annotations, and evaluate them in terms of image fidelity, cross-agent consistency, and CoDetection score from a camera-only BEV detector pretrained on real data and directly applied to the synthesized inputs. Fig.~\ref{fig:gen_baselines} shows that baselines often render jointly observed objects with inconsistent appearance. Table~\ref{tab:main_results} shows that \texttt{V2VCrafter} achieves the best overall results, indicating that stronger geometric fidelity and cross-agent consistency lead to better downstream collaborative detection score. More quantitative results and video generation demonstrations are provided in Appendix~\ref{app:exp_results} and ~\ref{app:video_generation}.}

\subsection{Data Augmentation Results}
To evaluate \texttt{V2VCrafter}'s utility for downstream tasks, we mix real and generated data to train camera-only BEV detectors, and report the CoDetection Score on the validation subsets. As shown in Table~\ref{tab:aug_cp_camera_only}, the augmented data improves most metrics across three CP models, showing that the generated images provide effective supervision for downstream collaborative detection. The gains are generally larger on the Basic and Moderate subsets than on the Hard subset, likely because severe occlusion, longer observation distance, and larger road-slope variation make the hardest cases less recoverable from image augmentation alone. Camera+LiDAR data augmentation results are reported in Appendix~\ref{app:exp_results}.

\subsection{Ablation Study}

\noindent \textbf{Effectiveness of Key Modules.}
As shown in Fig.~\ref{fig:ablation_qualitative} and Table~\ref{tab:ablation_main}, removing the FPV mask causes the largest overall drop, showing that explicit FPV supervision is critical for precise spatial control. 
Removing \textit{Cross-agent Attention (CA-Attn.)} markedly hurts consistency and jointly observed codetection score, indicating that CA-Attn. is the main component for cross-agent alignment. \change{When CA-Attn. is kept but $\mathbf{V}^*$ is removed, the performance drop is smaller but still clear, suggesting that $\mathbf{V}^*$ provides additional semantic cues for identifying shared instances beyond spatial alignment alone.}

\noindent \textbf{Effectiveness of Progressive Training.} The setting ``w/o Prog. Training'' trains the model one-shot for two-agent generation. As shown in Table~\ref{tab:ablation_main}, it performs consistently worse than the full model on fidelity, consistency, and camera-only detection, showing that progressive training makes multi-agent optimization more effective.
More ablation studies on key hyperparameters including different CFG scales and FPV-BEV fusion weights, are provided in Appendix~\ref{app:ablation}.


\section{Conclusion}
In this paper, we have introduced \texttt{V2VCrafter}, the first framework for CAD image generation \change{across vehicles}. Our proposed two-stage progressive training strategy effectively addresses the expansion challenge of learning objective. Furthermore, the novel cross-agent attention mechanism ensures perceptual consistency for joint observation across CAVs. Experiments have confirmed that \texttt{V2VCrafter} generates high-fidelity, coherent scenes that boost downstream collaborative perception tasks. A current limitation is that Vehicle-to-Infrastructure (V2I) generation is not yet supported due to the limited view diversity of fixed RoadSide Units (RSUs) in current datasets, which we leave for future work.



\newpage

\bibliographystyle{refstyle}
\bibliography{main}
\clearpage
\appendix
\setcounter{secnumdepth}{1}
\setcounter{section}{0}
\renewcommand{\thesection}{\arabic{section}}
\section*{Appendix}

\section{Theoretical Statements and Proofs}
\label{app:training_objective}

This section presents a more formal analysis of our progressive training strategy. We first provide empirical motivation, and then organize the argument through definitions, assumptions, lemmas, the main theorem, and proofs. The goal is to clarify why decomposing multi-agent generation into intra-agent pretraining and cross-agent refinement reduces optimization difficulty. Our analysis is local to the training objective and is not intended as a global convergence guarantee for non-convex diffusion optimization.

\subsection{Empirical Motivation}
\label{subsec:challenge_joint}

\noindent\textbf{Empirical Evidence.} We train MagicDrive~\citep{gao2023magicdrive} with identical hyperparameters on two V2X-Real variants: (i) single-agent data with $N=1$ (1$\times$4 views), derived by masking CAV2 data, and (ii) dual-agent data with $N=2$ (2$\times$4 views), using complete V2X-Real. As shown in Fig.~\ref{fig:scalability}, single-agent training converges by epoch 150 and produces clear street-view images with stable cross-view geometry. In contrast, direct dual-agent training remains blurry and inconsistent even after the same number of epochs. This observation motivates a formal analysis of why naive joint multi-agent denoising is harder to optimize.

\begin{figure*}[htbp]
    \centering
    \includegraphics[width=1.0\textwidth]{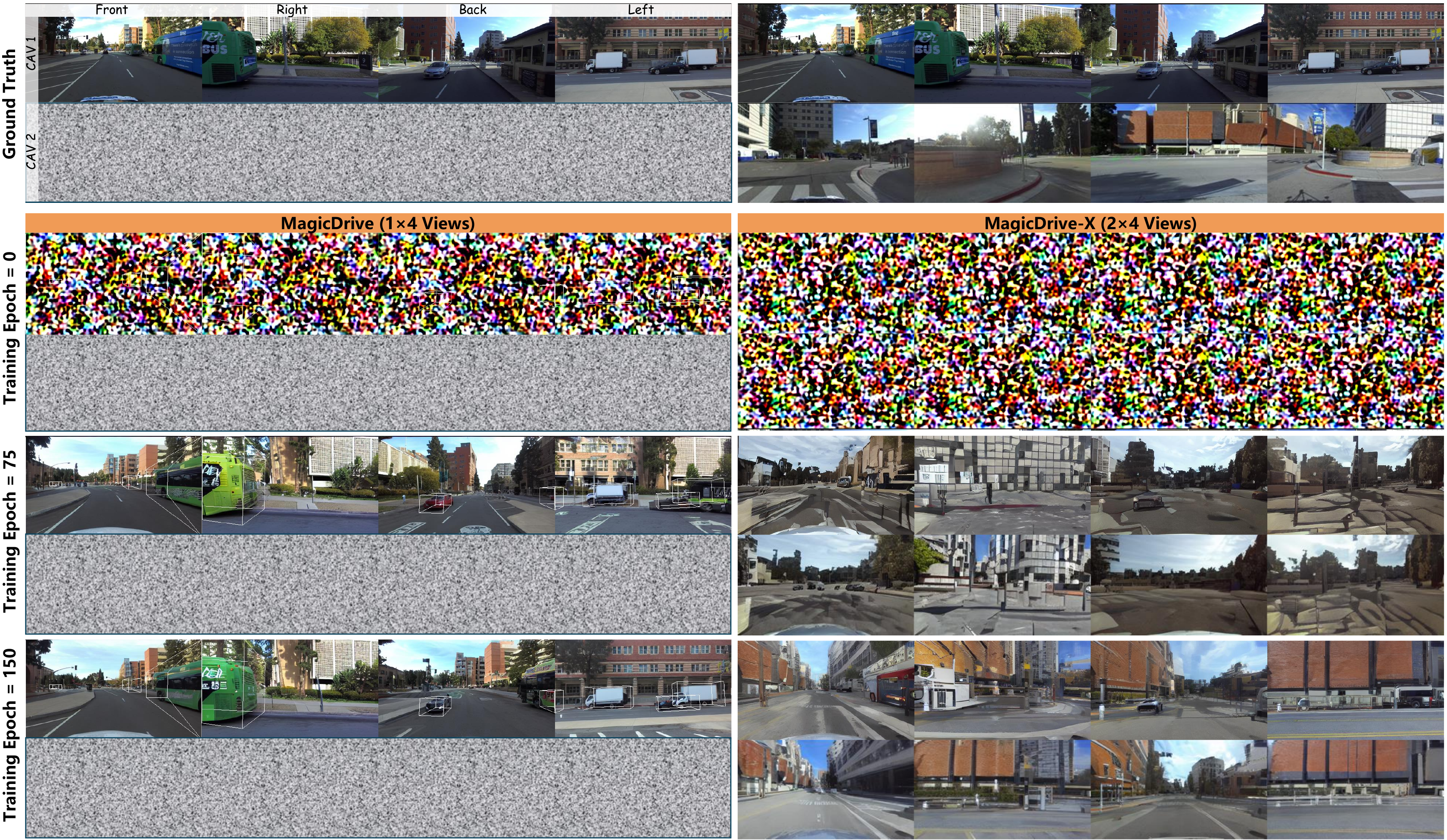}
    \vspace{-5mm}
    \caption{\textbf{Empirical evidence of training convergence failure in naive multi-agent extension.} While MagicDrive trained on single-agent data (left) converges well, the same model trained on dual-agent data (right) fails to converge and produce coherent results, demonstrating the necessity of a progressive training strategy.}
    \label{fig:scalability}
    \vspace{-6mm}
\end{figure*}

\subsection{Main Theoretical Statements}
\label{subsec:progressive_decomp}

Consider joint denoising over $N$ agents with $M$ cameras each. The coupled DDPM objective is
\begin{equation}
\label{eq:joint_loss}
\mathcal{L}_{\text{joint}} = \mathbb{E}_{z_{1:N,0}, \epsilon, t} \left[ \sum_{i=1}^N \left\|\epsilon_i - \epsilon_\theta(z_{1:N,t}, t, \mathcal{S}, \mathbf{P}_{1:N}, \mathcal{G})\right\|^2 \right],
\end{equation}
where $z_{1:N,t} = \{z_{1,t}, \ldots, z_{N,t}\}$ are the noisy latents for all $N \times M$ views at timestep $t$, $\epsilon_i$ is the target noise for agent $i$, $\mathcal{S}$ denotes the shared scene condition, $\mathbf{P}_{1:N}$ denotes the camera-pose conditions of all agents, and $\mathcal{G}$ denotes the geometric control signal. Let $\mathcal{R}_{\text{in}}$ and $\mathcal{R}_{\text{cr}}$ denote the intra-agent and cross-agent relation structures, let $\mathcal{L}_{\text{stage1}}$ and $\mathcal{L}_{\text{stage2}}$ denote the Stage-I and Stage-II objectives, let $\mathcal{C}(\cdot)$ denote the complexity surrogate, let $\mathcal{C}_{\text{couple}}$ denote the coupling term, and let $g_{\theta_1}, g_\phi$ denote the stochastic gradients of the two parameter blocks. The main statement below summarizes the two benefits of progressive training: lower surrogate complexity and decoupled stage-wise gradients.

\begin{theorem}[Progressive training improves local optimization]
\label{thm:progressive_advantage}
Under Assumptions~\ref{ass:noise_decomp}--\ref{ass:orthogonality}, progressive training reduces optimization difficulty in two senses. It removes the complexity gap caused by jointly fitting $\mathcal{R}_{\text{in}}$ and $\mathcal{R}_{\text{cr}}$ in one parameter space, and it eliminates cross-stage gradient interference. Specifically,
\begin{equation}
\label{eq:joint_complexity_repeat}
\Delta_{\mathcal{C}}
:=
\mathcal{C}(\mathcal{L}_{\text{joint}})
-
\left[\mathcal{C}(\mathcal{L}_{\text{stage1}}) + \mathcal{C}(\mathcal{L}_{\text{stage2}})\right]
=
\mathcal{C}_{\text{couple}}
> 0,
\end{equation}
\begin{equation}
\label{eq:zero_covariance}
\operatorname{Tr}\left(\operatorname{Cov}(g_{\theta_1}, g_\phi)\right) = 0,
\end{equation}
\end{theorem}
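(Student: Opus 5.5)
The plan is to work entirely at the level of the surrogate objects the statement refers to, since Assumptions~\ref{ass:noise_decomp}--\ref{ass:orthogonality} are what give $\mathcal{C}(\cdot)$ and the gradient blocks their structure. I will first use the noise-decomposition assumption to write the joint denoiser's target for each agent as an intra-agent part plus a cross-agent residual. For agent $i$, the optimal $\epsilon_\theta$ then splits into a component determined by $(z_{i,t},\mathcal{S},\mathbf{P}_i,\mathcal{G}_{\text{in}}(i))$, which is the Stage-I target, and a correction driven by neighbours' latents through $\mathcal{R}_{\text{cr}}$, which is the Stage-II target given $\kappa_{\text{ref}}$. Substituting this into \eqref{eq:joint_loss} and expanding the square gives three pieces:
\begin{itemize}
\item a term depending only on the intra-agent parameters $\theta_1$;
\item a term depending only on the cross-agent parameters $\phi$ (with $\theta_1$ fixed);
\item a cross term coupling $\mathcal{R}_{\text{in}}$ and $\mathcal{R}_{\text{cr}}$ inside a single parameter space.
\end{itemize}

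For \eqref{eq:joint_complexity_repeat}, I will invoke whatever additivity the complexity surrogate is assumed to have over such a decomposition. This could be a Hessian-trace, a conditioning-number, or a sum-of-relation-sizes definition. The first two pieces are identified with $\mathcal{C}(\mathcal{L}_{\text{stage1}})$ and $\mathcal{C}(\mathcal{L}_{\text{stage2}})$, and the remainder is by definition $\mathcal{C}_{\text{couple}}$. The equality then follows directly.

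Strict positivity needs a separate argument. My plan is to argue that $\mathcal{C}_{\text{couple}}$ is the complexity of the off-diagonal Hessian block, or of the mixed relation set, generated by joint fitting. This term is nonzero whenever $\mathcal{R}_{\text{cr}}$ is nontrivial, that is, whenever some edge of $\mathcal{E}_{\text{cr}}$ exists, which holds for $N\ge 2$ with overlapping fields of view. I therefore expect to need a nondegeneracy clause from the assumptions stating that cross-agent relations are nonempty and actually influence the joint optimum.

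For \eqref{eq:zero_covariance}, I will use the structure of progressive training directly.
\begin{itemize}
\item In Stage II the ego-centric layers are frozen, so $\theta_1$ receives no gradient from $\mathcal{L}_{\text{stage2}}$. In Stage I, $\phi$ is inactive because $\kappa_{\text{ref}}=\varnothing$.
\item The covariance between $g_{\theta_1}$ and $g_\phi$ is formed over the joint sampling of $(z_0,\epsilon,t)$. I will then apply the orthogonality assumption: gradients of the intra-agent residual and the cross-agent residual are uncorrelated, or lie in orthogonal subspaces, conditionally on the Stage-I features.
\item With that, each entry $\mathbb{E}[(g_{\theta_1}-\bar g_{\theta_1})_k (g_\phi-\bar g_\phi)_k]$ factorizes to zero, and summing over matched coordinates gives a zero trace.
\item If the two blocks have different dimensions, I will interpret the trace via the stacked gradient's cross-covariance block, as the assumption presumably does.
\end{itemize}

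The main obstacle is the strict inequality $\mathcal{C}_{\text{couple}}>0$, because it depends on how the surrogate $\mathcal{C}$ is defined and on a nondegeneracy condition. I will first try expressing $\mathcal{C}_{\text{couple}}$ as a squared norm of the mixed second derivative $\nabla_{\theta_1}\nabla_\phi\mathcal{L}_{\text{joint}}$. Under the noise-decomposition assumption this is nonzero exactly when cross-agent residuals depend on intra-agent features. If that identification fails, I will fall back to a counting surrogate, under which $\mathcal{C}_{\text{couple}}\ge|\mathcal{E}_{\text{cr}}|>0$.
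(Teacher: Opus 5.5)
Your treatment of Eq.~\eqref{eq:zero_covariance} is fine and matches the paper. Assumption~\ref{ass:orthogonality} already states $\operatorname{Cov}(g_{\theta_1},g_\phi)=0$ because Stage II freezes $\theta_1$, so the trace vanishes at once. Your frozen-block observation is the operative point; the conditional-factorization step is unnecessary, and conditional uncorrelatedness alone would not give the unconditional covariance anyway.

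The gap is in Eq.~\eqref{eq:joint_complexity_repeat}. You try to get the identity by expanding the squared loss under Assumption~\ref{ass:noise_decomp} and identifying its pieces with $\mathcal{C}(\mathcal{L}_{\text{stage1}})$, $\mathcal{C}(\mathcal{L}_{\text{stage2}})$ and a remainder, which you then try to prove positive. But the surrogate is not a functional of loss pieces. Definition~\ref{def:complexity} puts it on relation graphs, $\mathcal{C}(\mathcal{R})=|\mathcal{E}|\cdot\mathbb{E}_{\text{frames}}[\operatorname{Var}(\mathcal{E})]$, so expanding $\|\epsilon-\epsilon_\theta\|^2$ yields no complexity identity. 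If you call the remainder $\mathcal{C}_{\text{couple}}$ by definition, the equality becomes a tautology and all the content sits in $\mathcal{C}_{\text{couple}}>0$, which neither of your routes delivers. The counting fallback actually fails: an additive edge count charges $|\mathcal{E}_{\text{in}}|$ to Stage I and $|\mathcal{E}_{\text{cr}}|$ to Stage II, giving $\mathcal{C}(\mathcal{L}_{\text{joint}})=|\mathcal{E}_{\text{in}}|+|\mathcal{E}_{\text{cr}}|$ and hence zero gap. Your bound $\mathcal{C}_{\text{couple}}\ge|\mathcal{E}_{\text{cr}}|$ double-counts edges already assigned to Stage II. The mixed-Hessian identification has no connection to the paper's $\mathcal{C}$.

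The missing ingredient is that the paper does not derive the coupling term; it assumes it. Assumption~\ref{ass:positive_coupling} posits $\mathcal{C}(\mathcal{L}_{\text{joint}})=\mathcal{C}(\mathcal{R}_{\text{in}})+\mathcal{C}(\mathcal{R}_{\text{cr}})+\mathcal{C}_{\text{couple}}$ with $\mathcal{C}_{\text{couple}}=2\sqrt{\mathcal{C}(\mathcal{R}_{\text{in}})\mathcal{C}(\mathcal{R}_{\text{cr}})}>0$. Lemma~\ref{lem:complexity_reduction} supplies Eq.~\eqref{eq:sequential_complexity}, $\mathcal{C}(\mathcal{L}_{\text{stage1}})+\mathcal{C}(\mathcal{L}_{\text{stage2}})=\mathcal{C}(\mathcal{R}_{\text{in}})+\mathcal{C}(\mathcal{R}_{\text{cr}})$, on the grounds that the two stages fit the two relation types in separate parameter blocks. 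Subtracting gives $\Delta_{\mathcal{C}}=\mathcal{C}_{\text{couple}}>0$ in one line.

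The nondegeneracy condition you anticipated, $\mathcal{E}_{\text{cr}}\neq\emptyset$, is also not the right one. Positivity of $2\sqrt{\mathcal{C}(\mathcal{R}_{\text{in}})\mathcal{C}(\mathcal{R}_{\text{cr}})}$ needs both factors positive. Since $\mathcal{R}_{\text{in}}$ is static, its frame-variance factor in Definition~\ref{def:complexity} can vanish. That is why strict positivity cannot be extracted from the definitions and has to be taken as a hypothesis. Your proof should cite Assumption~\ref{ass:positive_coupling} and Eq.~\eqref{eq:sequential_complexity} rather than try to establish them.
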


Theorem~\ref{thm:progressive_advantage} states that progressive training helps because it removes the coupling burden in the surrogate complexity and isolates the two parameter blocks across stages. A normalized reduction ratio is
\begin{equation}
\label{eq:complexity_reduction}
\rho_{\text{reduce}}
:=
\frac{\Delta_{\mathcal{C}}}{\mathcal{C}(\mathcal{L}_{\text{joint}})}
=
\frac{
2\sqrt{\mathcal{C}(\mathcal{R}_{\text{in}})\mathcal{C}(\mathcal{R}_{\text{cr}})}
}{
\mathcal{C}(\mathcal{R}_{\text{in}})
+
\mathcal{C}(\mathcal{R}_{\text{cr}})
+
2\sqrt{\mathcal{C}(\mathcal{R}_{\text{in}})\mathcal{C}(\mathcal{R}_{\text{cr}})}
},
\end{equation}
where $\rho_{\text{reduce}}$ measures the fraction of surrogate complexity removed by progressive decomposition.

\begin{corollary}[Implication for multi-agent generation]
\label{cor:interpretation}
If $\mathcal{C}(\mathcal{R}_{\text{in}})$ and $\mathcal{C}(\mathcal{R}_{\text{cr}})$ are comparable, then Eq.~\eqref{eq:complexity_reduction} yields a substantial complexity reduction, and Stage I is more stable than direct optimization of Eq.~\eqref{eq:joint_loss}. This matches Fig.~\ref{fig:scalability}.
\end{corollary}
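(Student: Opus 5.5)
The plan is to treat the corollary as a quantitative reading of Eq.~\eqref{eq:complexity_reduction}, combined with the decomposition already established in Theorem~\ref{thm:progressive_advantage}. I write $a=\sqrt{\mathcal{C}(\mathcal{R}_{\text{in}})}$ and $b=\sqrt{\mathcal{C}(\mathcal{R}_{\text{cr}})}$, both positive. Eq.~\eqref{eq:complexity_reduction} then becomes $\rho_{\text{reduce}}=2ab/(a+b)^2$. This presupposes, as the theorem's proof gives, that $\mathcal{C}(\mathcal{L}_{\text{joint}})=(a+b)^2$ and $\mathcal{C}_{\text{couple}}=2ab$.

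I will make ``comparable'' precise as $\kappa^{-1}\le r:=b/a\le\kappa$ for some constant $\kappa\ge 1$. Then $\rho_{\text{reduce}}=h(r):=2r/(1+r)^2$. A one-line derivative computation gives $h'(r)=2(1-r)/(1+r)^3$, so $h$ increases on $(0,1]$ and decreases on $[1,\infty)$. It attains its maximum $1/2$ at $r=1$ and satisfies $h(r)=h(1/r)$. Hence on the comparability window,
\begin{equation*}
\rho_{\text{reduce}}\;\ge\;h(\kappa)=\frac{2\kappa}{(1+\kappa)^2},
\end{equation*}
which is a constant fraction independent of the absolute scales. For instance, $\kappa=2$ gives at least $4/9$, and exact balance removes half of the surrogate complexity. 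This is the ``substantial reduction'' claim. I will also note the converse: if $r\to 0$ or $r\to\infty$, then $\rho_{\text{reduce}}\to 0$. This shows the comparability hypothesis is exactly what is needed.

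For the stability claim, I compare Stage I against the joint objective within the same surrogate. Under Assumptions~\ref{ass:noise_decomp}--\ref{ass:orthogonality}, Stage I fits only $\mathcal{R}_{\text{in}}$, so $\mathcal{C}(\mathcal{L}_{\text{stage1}})=a^2$. By contrast, $\mathcal{C}(\mathcal{L}_{\text{joint}})=(a+b)^2=a^2(1+r)^2\ge a^2(1+\kappa^{-1})^2$. Stage I is therefore strictly cheaper by a factor bounded away from $1$. To turn ``cheaper'' into ``more stable,'' I will interpret the surrogate complexity, as the assumptions intend, as a bound on the local smoothness and gradient-variance constants of the objective. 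Under that reading, the admissible step size and the variance floor of SGD scale inversely with $\mathcal{C}$. Eq.~\eqref{eq:zero_covariance} then ensures that the Stage-I block $\theta_1$ receives no interference term from $\phi$. Its gradient variance is thus that of $\mathcal{L}_{\text{stage1}}$ alone, without the cross-covariance present in the joint gradient.

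The main obstacle is this last step. ``More stable'' is not defined by the surrogate itself, so I will state explicitly the monotone link between $\mathcal{C}$ and the smoothness/variance constants that I rely on. Beyond that, the step is a direct comparison of two numbers. Agreement with Fig.~\ref{fig:scalability} is an empirical consistency remark, not something to prove: dual-agent V2X-Real views have comparable intra- and cross-agent structure, which is precisely the regime $r\approx 1$ where the reduction is largest.
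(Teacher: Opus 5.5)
Your proposal is correct at the level of rigor the statement admits, and its first half is the paper's argument made quantitative. The paper only remarks that when $\mathcal{C}(\mathcal{R}_{\text{in}})\approx\mathcal{C}(\mathcal{R}_{\text{cr}})$ the numerator of Eq.~\eqref{eq:complexity_reduction} is a non-negligible fraction of the denominator. Your reduction to $h(r)=2r/(1+r)^2$ gives the uniform bound $h(\kappa)=2\kappa/(1+\kappa)^2$ on $[\kappa^{-1},\kappa]$, the maximum $1/2$ at $r=1$, and $h\to 0$ at the extremes. This turns ``substantial'' into an explicit constant and shows that the comparability hypothesis is genuinely needed.

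The routes diverge on stability. The paper gets it in one line from Eq.~\eqref{eq:stage1_variance} of Lemma~\ref{lem:complexity_reduction}, reading the Stage-I variance bound $\sigma^2/B$ as ``lower stochastic instability.'' You instead compare $\mathcal{C}(\mathcal{L}_{\text{stage1}})$ with $\mathcal{C}(\mathcal{L}_{\text{joint}})=(a+b)^2$ and posit a monotone link from $\mathcal{C}$ to smoothness and variance constants, which is not among the paper's assumptions. The paper's route is more economical because it reuses a stated lemma. However, that lemma bounds only the Stage-I variance and says nothing about the joint objective, so ``than the fully coupled objective'' is asserted rather than derived. Your route makes the needed comparative hypothesis explicit and yields a concrete gap factor $(1+r)^2\ge(1+\kappa^{-1})^2$, at the cost of one extra modeling assumption.

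Two small points. First, you do not need $\mathcal{C}(\mathcal{L}_{\text{stage1}})=\mathcal{C}(\mathcal{R}_{\text{in}})$, which the paper never states separately; it gives only the sum identity Eq.~\eqref{eq:sequential_complexity}. Nonnegativity of $\mathcal{C}$ already gives $\mathcal{C}(\mathcal{L}_{\text{stage1}})\le a^2+b^2$, and $(a+b)^2/(a^2+b^2)\ge 1+2\kappa/(1+\kappa^2)$ on your window. Second, Eq.~\eqref{eq:zero_covariance} concerns the Stage-I versus Stage-II parameter blocks. It therefore does not contribute to the Stage-I versus joint comparison, which rests entirely on your monotone-link hypothesis.
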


\subsection{Definitions, Assumptions, and Supporting Lemmas}

This subsection introduces the formal objects used in the proof: the relation decomposition, the complexity surrogate, the stage-wise objectives, and the regularity conditions.

\begin{definition}[Intra-agent and cross-agent relations]
\label{def:relations}
Let $\mathcal{R}_{\text{in}}$ denote the collection of static intra-agent view relations induced by cameras on the same agent, and let $\mathcal{R}_{\text{cr}}$ denote the dynamic cross-agent relations induced by cameras across different agents.
\end{definition}

\begin{assumption}[Additive relation decomposition]
\label{ass:noise_decomp}
Assume the effective denoising target decomposes as
\begin{equation}
\label{eq:noise_decomp}
\epsilon_\theta(z, \mathcal{R}_{\text{in}}, \mathcal{R}_{\text{cr}})
=
\epsilon_\theta^{\mathcal{R}_{\text{in}}}(z, \mathcal{R}_{\text{in}})
+
\epsilon_\theta^{\mathcal{R}_{\text{cr}}}(z, \mathcal{R}_{\text{cr}})
+
\epsilon_{\text{res}},
\end{equation}
where $\epsilon_{\text{res}}$ collects residual interactions.
\end{assumption}

\begin{definition}[Complexity surrogate]
\label{def:complexity}
For a multi-view relation graph $\mathcal{R}$ with edge set $\mathcal{E}$, define the optimization-complexity surrogate
\begin{equation}
\label{eq:complexity_def}
\mathcal{C}(\mathcal{R}) = |\mathcal{E}| \cdot \mathbb{E}_{\text{frames}}[\operatorname{Var}(\mathcal{E})],
\end{equation}
where $|\mathcal{E}|$ measures the scale of relation modeling and $\operatorname{Var}(\mathcal{E})$ captures how much those relations vary across frames.
\end{definition}

\begin{assumption}[Positive coupling in joint learning]
\label{ass:positive_coupling}
Assume the joint complexity satisfies
\begin{equation}
\label{eq:joint_complexity}
\mathcal{C}(\mathcal{L}_{\text{joint}})
=
\mathcal{C}(\mathcal{R}_{\text{in}})
+
\mathcal{C}(\mathcal{R}_{\text{cr}})
+
\underbrace{2\sqrt{\mathcal{C}(\mathcal{R}_{\text{in}})\mathcal{C}(\mathcal{R}_{\text{cr}})}}_{\mathcal{C}_{\text{couple}} > 0},
\end{equation}
where $\mathcal{C}_{\text{couple}}$ is the coupling term.
\end{assumption}

The progressive strategy decomposes Eq.~\eqref{eq:joint_loss} into two sequential stages. Stage I isolates the static intra-agent term, whereas Stage II freezes the Stage-I backbone and learns only cross-agent refinement. Let $z_{i,0}$ denote the clean latent of agent $i$, let $\mathcal{G}_{\text{in}}(i)$ and $\mathcal{G}_{\text{cr}}(i)$ denote the intra-agent and cross-agent geometric conditions for agent $i$, and let $h_{-i}$ denote the frozen neighbor features. The corresponding objectives and refinement form are
\begin{equation}
\label{eq:stage1_loss}
\mathcal{L}_{\text{stage1}} =
\mathbb{E}_{z_{i,0}, \epsilon, t}
\left[
\left\|\epsilon - \epsilon_{\theta_1}(z_{i,t}, t, \mathcal{S}, \mathbf{P}_i, \mathcal{G}_{\text{in}}(i))\right\|^2
\right],
\end{equation}
\begin{equation}
\label{eq:stage2_loss}
\mathcal{L}_{\text{stage2}}
=
\mathbb{E}_{z_{i,0}, \epsilon, t}
\left[
\left\|\epsilon - \epsilon_{\theta_1,\phi}(z_{i,t}, \cdots, h_{-i}, \mathcal{G}_{\text{cr}}(i))\right\|^2
\right],
\end{equation}
\begin{equation}
\label{eq:residual_refinement}
\epsilon_{\theta_1,\phi}(z_{i,t}, \cdots, h_{-i})
=
\underbrace{\epsilon_{\theta_1}(z_{i,t}, \cdots)}_{\text{fixed intra-agent term}}
+
\underbrace{\Delta \epsilon_\phi(h_{-i}, \mathcal{G}_{\text{cr}}(i))}_{\text{cross-agent refinement}},
\end{equation}
where $\cdots$ abbreviates the conditioning variables shared with Eq.~\eqref{eq:stage1_loss}, $\operatorname{Enc}_{\theta_1}$ denotes the frozen Stage-I encoder, and $h_{-i} = \operatorname{Enc}_{\theta_1}(z_{-i,0})$ denotes the resulting neighbor features.

\begin{assumption}[Gradient conflict in joint optimization]
\label{ass:grad_conflict}
Let
\[
\bar{g}^{\text{in}} = \frac{1}{N}\sum_{i=1}^N \nabla_\theta \mathcal{L}_i^{\mathcal{R}_{\text{in}}},
\qquad
\bar{g}^{\text{cr}} = \frac{1}{N}\sum_{i=1}^N \nabla_\theta \mathcal{L}_i^{\mathcal{R}_{\text{cr}}}.
\]
where $\mathcal{L}_i^{\mathcal{R}_{\text{in}}}$ and $\mathcal{L}_i^{\mathcal{R}_{\text{cr}}}$ are the intra-agent and cross-agent loss components for agent $i$. Assume these gradients are non-aligned:
\begin{equation}
\label{eq:grad_conflict}
\cos(\bar{g}^{\text{in}}, \bar{g}^{\text{cr}})
=
\frac{\langle \bar{g}^{\text{in}}, \bar{g}^{\text{cr}} \rangle}
{\|\bar{g}^{\text{in}}\| \cdot \|\bar{g}^{\text{cr}}\|}
\leq 0.
\end{equation}
\end{assumption}

\begin{assumption}[Stage-wise parameter separation]
\label{ass:orthogonality}
During Stage II, $\theta_1$ is frozen and $\phi$ parameterizes only the refinement term. Then
\begin{equation}
\label{eq:grad_orthogonal}
\nabla_\phi \mathcal{L}_{\text{stage2}}
=
\mathbb{E}\left[\nabla_\phi \left\|\Delta \epsilon_\phi(h_{-i}, \mathcal{G}_{\text{cr}})\right\|^2\right],
\operatorname{Cov}(g_{\theta_1}, g_\phi) = 0.
\end{equation}
where $g_{\theta_1}$ and $g_\phi$ are the corresponding stochastic gradients.
\end{assumption}

\subsection{Proof of Supporting Lemmas}

This subsection proves the two technical ingredients used in Theorem~\ref{thm:progressive_advantage}: joint training suffers from gradient attenuation under conflict, whereas progressive decomposition removes the complexity coupling term.

\begin{lemma}[Joint training gradient attenuation]
\label{lem:grad_attenuation}
Under Assumption~\ref{ass:grad_conflict},
\begin{equation}
\label{eq:grad_magnitude}
\|\bar{g}^{\text{in}} + \bar{g}^{\text{cr}}\|^2
=
\|\bar{g}^{\text{in}}\|^2 + \|\bar{g}^{\text{cr}}\|^2 + 2\langle \bar{g}^{\text{in}}, \bar{g}^{\text{cr}} \rangle
\leq
\|\bar{g}^{\text{in}}\|^2 + \|\bar{g}^{\text{cr}}\|^2.
\end{equation}
\end{lemma}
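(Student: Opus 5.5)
The plan is to expand the squared norm of the sum via bilinearity of the inner product and then use Assumption~\ref{ass:grad_conflict} to control the sign of the cross term.

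\textbf{Step 1 (the identity).} For any two vectors $a,b$ in the parameter space equipped with the Euclidean inner product,
\[
\|a+b\|^2=\langle a+b,a+b\rangle=\|a\|^2+\|b\|^2+2\langle a,b\rangle ,
\]
by bilinearity and symmetry. Applying this with $a=\bar{g}^{\text{in}}$ and $b=\bar{g}^{\text{cr}}$ gives the equality part of Eq.~\eqref{eq:grad_magnitude}.

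\textbf{Step 2 (the inequality).} It then suffices to show $\langle \bar{g}^{\text{in}},\bar{g}^{\text{cr}}\rangle\le 0$.
\begin{itemize}
\item If both gradients are nonzero, Eq.~\eqref{eq:grad_conflict} gives $\cos(\bar{g}^{\text{in}},\bar{g}^{\text{cr}})\le 0$. Multiplying by the positive quantity $\|\bar{g}^{\text{in}}\|\,\|\bar{g}^{\text{cr}}\|$ yields $\langle \bar{g}^{\text{in}},\bar{g}^{\text{cr}}\rangle\le 0$.
\item If either gradient is zero, the cosine is undefined and the assumption says nothing. In that case the inner product is $0$ directly, so the inequality holds with equality.
\end{itemize}
In both cases, dropping the nonpositive term $2\langle \bar{g}^{\text{in}},\bar{g}^{\text{cr}}\rangle$ from the identity gives the bound.

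\textbf{Main obstacle.} The only subtle point is this degenerate zero-gradient case, where Assumption~\ref{ass:grad_conflict} is not well defined. The plan handles it by a separate one-line case split rather than by modifying the assumption.

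I will close with a remark on interpretation. Equality holds exactly when the gradients are orthogonal, and the inequality is strict when the cosine is negative. So under conflict, the joint update's squared magnitude is attenuated relative to the sum of the separate squared magnitudes. This is the ingredient later used in Theorem~\ref{thm:progressive_advantage}.
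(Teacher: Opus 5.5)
Your proposal is correct and takes the same route as the paper: expand $\|\bar{g}^{\text{in}}+\bar{g}^{\text{cr}}\|^2$ by bilinearity, then use Assumption~\ref{ass:grad_conflict} to get $\langle \bar{g}^{\text{in}},\bar{g}^{\text{cr}}\rangle\le 0$. Your separate treatment of the zero-gradient case, where the cosine is undefined, is a small gain in rigor that the paper's proof passes over.
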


\begin{proof}
The statement follows by directly expanding the squared norm. Since Assumption~\ref{ass:grad_conflict} gives $\langle \bar{g}^{\text{in}}, \bar{g}^{\text{cr}} \rangle \leq 0$, the cross term cannot increase the descent magnitude. Hence direct joint optimization suffers from gradient attenuation due to interference between static and dynamic signals. \end{proof}

\begin{lemma}[Complexity decomposition under progressive training]
\label{lem:complexity_reduction}
Under Assumptions~\ref{ass:noise_decomp} and~\ref{ass:orthogonality},
\begin{equation}
\label{eq:stage1_grad}
\nabla_{\theta_1} \mathcal{L}_{\text{stage1}}
=
\mathbb{E}\left[\nabla_{\theta_1}\left\|\epsilon - \epsilon_{\theta_1}(z_{i,t}, \cdots)\right\|^2\right],
\end{equation}
\begin{equation}
\label{eq:stage1_variance}
\operatorname{Var}(\nabla_{\theta_1}\mathcal{L}_{\text{stage1}})
\leq
\frac{\sigma^2}{B},
\qquad
\sigma^2 = \mathbb{E}\left[\left\|\epsilon - \epsilon_{\theta_1}\right\|^2\right],
\end{equation}
\begin{equation}
\label{eq:sequential_complexity}
\mathcal{C}(\mathcal{L}_{\text{stage1}}) + \mathcal{C}(\mathcal{L}_{\text{stage2}})
=
\mathcal{C}(\mathcal{R}_{\text{in}}) + \mathcal{C}(\mathcal{R}_{\text{cr}}).
\end{equation}
where $B$ is the batch size.
\end{lemma}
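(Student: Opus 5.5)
The three claims of Lemma~\ref{lem:complexity_reduction} are handled separately, each reduced to the stage-wise structure in Eqs.~\eqref{eq:stage1_loss}--\eqref{eq:residual_refinement} together with Assumptions~\ref{ass:noise_decomp} and~\ref{ass:orthogonality}.

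\textbf{Gradient identity, Eq.~\eqref{eq:stage1_grad}.} In Stage I we have $\kappa_{\text{ref}}=\varnothing$, so the loss $\mathcal{L}_{\text{stage1}}$ depends on $\theta_1$ only through $\epsilon_{\theta_1}(z_{i,t},\cdots)$. The sampling distribution of $(z_{i,0},\epsilon,t)$ does not depend on $\theta_1$. Under the standard integrability assumption (a dominated-convergence bound on $\nabla_{\theta_1}\|\epsilon-\epsilon_{\theta_1}\|^2$), gradient and expectation can therefore be interchanged, which gives Eq.~\eqref{eq:stage1_grad} directly. Assumption~\ref{ass:noise_decomp} enters here only to justify that the cross-agent component $\epsilon_\theta^{\mathcal{R}_{\text{cr}}}$ is absent in Stage I, so no other term contributes to the gradient.

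\textbf{Variance bound, Eq.~\eqref{eq:stage1_variance}.} Write the minibatch gradient as an average of $B$ i.i.d.\ per-sample gradients $\xi_b=\nabla_{\theta_1}\|\epsilon_b-\epsilon_{\theta_1}(z_b,\cdots)\|^2$. Independence gives
\[
\operatorname{Var}(\bar\xi)=\operatorname{Var}(\xi_1)/B\le \mathbb{E}\|\xi_1\|^2/B .
\]
By the chain rule, $\xi_1=-2J_{\theta_1}^\top(\epsilon-\epsilon_{\theta_1})$, where $J_{\theta_1}$ is the Jacobian of the network output with respect to $\theta_1$. If the Jacobian satisfies $\|J_{\theta_1}\|_{\mathrm{op}}\le 1/2$ (or any bound $G$ absorbed into $\sigma^2$), then $\mathbb{E}\|\xi_1\|^2\le \mathbb{E}\|\epsilon-\epsilon_{\theta_1}\|^2=\sigma^2$. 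I would state this Jacobian normalization explicitly, since it is the extra regularity needed for the claimed constant.

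\textbf{Complexity additivity, Eq.~\eqref{eq:sequential_complexity}.} By Definition~\ref{def:complexity}, $\mathcal{C}$ is computed from the edge set that the objective actually fits, together with that edge set's frame variance. Stage I conditions only on $\mathcal{G}_{\text{in}}(i)$, so its relation graph is $\mathcal{R}_{\text{in}}$ and $\mathcal{C}(\mathcal{L}_{\text{stage1}})=\mathcal{C}(\mathcal{R}_{\text{in}})$. In Stage II, Assumption~\ref{ass:orthogonality} freezes $\theta_1$, and Eq.~\eqref{eq:residual_refinement} shows that the trainable part $\Delta\epsilon_\phi$ depends only on $(h_{-i},\mathcal{G}_{\text{cr}}(i))$. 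Its fitted edge set is therefore $\mathcal{E}_{\text{cr}}$, which gives $\mathcal{C}(\mathcal{L}_{\text{stage2}})=\mathcal{C}(\mathcal{R}_{\text{cr}})$. Because $\mathcal{G}_{\text{in}}$ and $\mathcal{G}_{\text{cr}}$ come from an edge-disjoint decomposition, no edge is counted twice, and summing the two stages yields the identity.

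\textbf{Main obstacle.} The hardest step is this last one. Here $\mathcal{C}$ is only a surrogate attached to objectives, so I must commit to the interpretation ``$\mathcal{C}(\mathcal{L})$ is $\mathcal{C}$ of the relation graph whose parameters are trained in $\mathcal{L}$.'' Under this interpretation, the frozen term in Stage II contributes zero complexity. That is exactly where Assumption~\ref{ass:orthogonality} is needed: with it, no coupling term of the form appearing in Assumption~\ref{ass:positive_coupling} arises.
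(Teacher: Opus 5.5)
Your proposal is correct and follows the paper's own argument. You get Eq.~\eqref{eq:stage1_grad} by differentiating the Stage-I objective, and Eq.~\eqref{eq:sequential_complexity} by identifying Stage I with $\mathcal{R}_{\text{in}}$ and the frozen-backbone Stage II (Assumption~\ref{ass:orthogonality} with Eq.~\eqref{eq:residual_refinement}) with $\mathcal{R}_{\text{cr}}$, under the same implicit reading of $\mathcal{C}(\mathcal{L})$ that the paper uses, so no coupling term appears. Edge-disjointness plays no role in that last step, since the identity is just the sum of the two stage-wise equalities.

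Where you are more careful than the paper is the variance bound, which the paper simply attributes to the frame-invariance of the intra-agent structure. Your minibatch-plus-chain-rule derivation is the right one, and you are correct that it needs an extra hypothesis such as $\|J_{\theta_1}\|_{\mathrm{op}}\le 1/2$ that the lemma does not state. That hypothesis cannot be dropped: for the scalar model $\epsilon_{\theta_1}=a\theta_1$ at $\theta_1=0$, the minibatch gradient variance is exactly $4a^2\sigma^2/B$. So list it as an added assumption rather than ``absorbing'' a general bound $G$ into $\sigma^2$, which the lemma fixes as $\mathbb{E}\|\epsilon-\epsilon_{\theta_1}\|^2$.
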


\begin{proof}
Eq.~\eqref{eq:stage1_grad} follows by differentiating the Stage-I objective. Since Stage I depends only on the frame-invariant intra-agent structure, its stochastic gradient variance satisfies Eq.~\eqref{eq:stage1_variance}. Moreover, Stage I optimizes the intra-agent component in Eq.~\eqref{eq:noise_decomp}, whereas Stage II optimizes only the residual cross-agent term on frozen intra-agent features. The two stages therefore avoid fitting both relation types in a shared parameter space, which removes the coupling term and yields Eq.~\eqref{eq:sequential_complexity}. \end{proof}

\subsection{Proof of Theorem~\ref{thm:progressive_advantage} and Corollary~\ref{cor:interpretation}}

\begin{proof}[Proof of Theorem~\ref{thm:progressive_advantage}]
The first claim follows from Lemma~\ref{lem:complexity_reduction} together with Assumption~\ref{ass:positive_coupling}. By Eq.~\eqref{eq:joint_complexity}, joint training contains the additional coupling term $\mathcal{C}_{\text{couple}}$, whereas Eq.~\eqref{eq:sequential_complexity} shows that this term is absent under progressive training. Substituting these identities yields Eq.~\eqref{eq:joint_complexity_repeat}; dividing both sides by $\mathcal{C}(\mathcal{L}_{\text{joint}})$ gives Eq.~\eqref{eq:complexity_reduction}.

For the second claim, Assumption~\ref{ass:orthogonality} implies that Stage II updates only $\phi$ while keeping $\theta_1$ fixed. Therefore, the covariance between the Stage-I and Stage-II gradients vanishes, which gives Eq.~\eqref{eq:zero_covariance}. In contrast, Lemma~\ref{lem:grad_attenuation} shows that joint training mixes the intra-agent and cross-agent gradients, thereby attenuating the effective descent direction. This establishes the theorem.
\end{proof}

\begin{proof}[Proof of Corollary~\ref{cor:interpretation}]
When $\mathcal{C}(\mathcal{R}_{\text{in}}) \approx \mathcal{C}(\mathcal{R}_{\text{cr}})$, the numerator in Eq.~\eqref{eq:complexity_reduction} remains a non-negligible fraction of the denominator. Consequently, removing the coupling term yields a substantive reduction in optimization complexity. Moreover, Eq.~\eqref{eq:stage1_variance} shows that the first stage has lower stochastic instability than the fully coupled objective. The convergence in Fig.~\ref{fig:scalability} is consistent with these implications.
\end{proof}

\section{More Implementation Details}
\label{app:impl_details}

\subsection{Experimental Setup Details}
\label{subsec:training_details}
Our model initializes its weights from the pre-trained Stable Diffusion v1.5 \citep{rombach2021highresolution} and only trains the newly added parameters. \texttt{V2VCrafter} was trained on a server equipped with 2 Intel(R) Xeon(R) Platinum 8200C CPUs and 8 NVIDIA A800-SXM4-80GB GPUs. We set the learning rate to 1e-4 and used a batch size of 4 per GPU. The model was trained for 150 epochs (100 for stage I and 50 for stage II), which took approximately 38 hours to complete. The output image resolution for each of the 8 camera views (from 2 CAVs) is 272x480. For inference, we use a CFG guidance scale of 1.5 and 50 diffusion steps. For the downstream collaborative 3D object detection task, we employ BEVFusion \citep{liu2022bevfusion} as the base detector for each vehicle and use AttnFuse \citep{xuOPV2VOpenBenchmark2022} for multi-agent feature fusion. For data augmentation experiments, we additionally evaluate V2VNet \citep{v2vnet} and Late Fusion \citep{11020620} baselines, testing both camera-only (C) and camera-LiDAR (C+L) configurations.

\subsection{Baseline Implementation Details}
\label{subsec:baseline_impl}
To the best of our knowledge, no prior work has addressed the task of CAD scene generation. Therefore, we establish baselines by adapting state-of-the-art single-agent driving scene generation models. We select \change{three} representative methods, MagicDrive \citep{gao2023magicdrive}, \change{MagicDrive-V2 \citep{gao2025magicdrive-v2}}, and BEVControl \citep{yang2023bevcontrolaccuratelycontrollingstreetview}, which are originally designed for single-vehicle six-view generation on Nuscenes dataset~\citep{Caesar_2020_CVPR}. We retain their original single-vehicle cross-view attention with fixed neighbor camera view pairs and extend them to the multi-agent generation setting. We denote these adapted versions as MagicDrive-X, \change{MagicDrive-V2-X}, and BEVControl-X, respectively, and retrain them on our processed V2X-Real dataset. Note that MagicDrive \change{and MagicDrive-V2} originally \change{do} not use FPV mask as an explicit control signal. However, we found that without FPV masks, \change{these models} completely \change{fail} to converge on V2X-Real. For fair comparison, we augment \change{them} with explicit FPV mask conditioning using the same masks generated in our preprocessing pipeline (see Appendix~\ref{subsec:fpv_mask_generation}).

\subsection{Data Selection Strategy}
\label{subsec:data_selection}
To ensure training data quality, we adopt a filtering strategy inspired by \citep{guo2025neptune-x}. We evaluate generated samples from two perspectives: semantic consistency by computing CLIP similarity between generated images and text descriptions, and layout accuracy by using a pre-trained ResNet classifier to verify object categories within bounding boxes. We then employ an active sampling strategy to prioritize challenging samples with higher training difficulty across multiple attributes including viewpoint, location, and environment. For multi-agent multi-view generation, we aggregate difficulty scores across all CAVs to select top-k instances that most effectively improve cross-agent consistency.

\subsection{Classifier-Free Guidance (CFG)}
\label{subsec:cfg}
We employ a selective CFG strategy where scene-level conditions such as camera poses and text descriptions are randomly dropped during training at rate $\gamma$, while object-level conditions including 3D boxes, BEV maps, and FPV masks are always retained. This ensures flexible control over scene appearance while maintaining strict spatial layout accuracy.

\subsection{Data Augmentation Details}
\label{subsec:data_aug}

\noindent \textbf{Camera-Only Augmentation.}
For camera-only (C) augmentation, we generate samples by recombining control conditions from different frames, including text prompts, BEV maps, camera poses, and 3D bounding boxes, while introducing moderate changes to scene layout and appearance. Following the explanation in V2X-Real~\citep{v2x-real}, V2V corridor scenes contain more diverse road layouts and larger slope changes, which make BEV-based camera solutions less stable in estimating 3D poses. We therefore train and test the Basic, Moderate, and Hard subsets separately so that harder samples do not destabilize optimization or obscure the effect of augmentation across difficulty levels.

\noindent \textbf{Camera-LiDAR Augmentation.}
For camera-LiDAR (C+L) augmentation, downstream detectors require strict geometric alignment between generated RGB images and the original LiDAR point clouds. We therefore keep the 3D box conditions fixed to the LiDAR observations and vary only appearance-related signals such as text prompts. In this setting, all subsets are mixed for training and testing, since the LiDAR branch provides an additional geometric anchor that stabilizes optimization even when harder pose configurations are included. This distinction makes the two augmentation protocols complementary rather than inconsistent.

\section{More Ablation Studies}
\label{app:ablation}

\subsection{Ablation on CFG Scale}
\label{subsec:cfg_ablation}

Table~\ref{tab:hyper_ablation} reports the FID and camera-only CoDetection score (mAP@30) obtained under different CFG scales. CFG$=1.5$ gives the best overall trade-off and matches the main-result setting in Table~\ref{tab:main_results}, achieving mAP@30$=27.88$ with competitive fidelity. Although CFG$=2.0$ yields a slightly lower FID, its controllability is weaker. Larger scales degrade both fidelity and controllability, indicating that overly strong guidance harms generation quality. We therefore use CFG$=1.5$ in the main experiments.

\subsection{Ablation on FPV-BEV Fusion Weight}
\label{subsec:fpv_bev_ablation}

As discussed in Section~\ref{sec:conditioning}, FPV masks and BEV maps provide complementary geometric cues: FPV primarily governs precise object placement, while BEV supplies global road topology. For this ablation, we fix the CFG scale to 1.5 and vary only the residual fusion weight $\eta$ in Eq.~(5). Table~\ref{tab:hyper_ablation} summarizes the results. A small fusion weight yields the best trade-off, with $\eta=0.1$ achieving the strongest overall result. When $\eta=0$, the model underuses global road context; when $\eta$ becomes too large, coarse BEV layouts begin to dominate the more precise local guidance from FPV masks, degrading both fidelity and CoDetection score.

\begin{table}[t]
    \centering
    \caption{\textbf{Ablation on CFG scale and FPV-BEV fusion weight.} \colorbox{red!20}{\rule{0pt}{0.7ex}\hspace{0.8em}} and \colorbox{yellow!20}{\rule{0pt}{0.7ex}\hspace{0.8em}} mark the best and second-best settings within each ablation block.}
    \vspace{-3mm}
    \label{tab:hyper_ablation}
    \small
    \renewcommand{\arraystretch}{1.0}
    \setlength{\tabcolsep}{4pt}
    \resizebox{\linewidth}{!}{
    \begin{tabular}{l|ccccc}
        \toprule
        \textbf{CFG Scale} & \textbf{1.2} & \textbf{1.5} (\textbf{Ours}) & \textbf{2.0} & \textbf{3.0} & \textbf{5.0} \\
        \midrule
        \textbf{FID $\downarrow$ / mAP@30 $\uparrow$} & 17.91 / 27.51 & \cellcolor{red!20}\textbf{17.46} / \textbf{27.88} & \cellcolor{yellow!20}17.31 / 26.43 & 18.01 / 24.94 & 20.54 / 20.62 \\
        \midrule
        \textbf{$\eta$ Value} & \textbf{0.0} & \textbf{0.05} & \textbf{0.1} (\textbf{Ours}) & \textbf{0.5} & \textbf{1.0} \\
        \midrule
        \textbf{FID $\downarrow$ / mAP@30 $\uparrow$} & 17.94 / 27.68 & \cellcolor{yellow!20}17.62 / 27.62 & \cellcolor{red!20}\textbf{17.46} / \textbf{27.88} & 18.52 / 25.96 & 19.35 / 21.01 \\
        \bottomrule
    \end{tabular}}
    \vspace{-4mm}
\end{table}

\section{Experimental Results}
\label{app:exp_results}

\subsection{Evaluation Metrics}
\label{subsec:eval_metrics}
This section describes the metrics adopted in our experiments, including mean Average Precision (mAP) for detection utility evaluation, Fr\'echet Inception Distance (FID) for image realism assessment, and three retrieval-based consistency metrics for jointly observed objects across agents.

\noindent\textbf{Mean Average Precision (mAP):} mAP is a standard metric for evaluating object detection performance. The computation follows these steps: First, predicted bounding boxes are classified as True Positives (TP) or False Positives (FP) by comparing their Intersection-over-Union (IoU) with ground truth boxes against a predefined threshold. Second, predictions are ranked by confidence scores to construct the Precision-Recall (P-R) curve. Third, the Average Precision (AP) for each object class is computed as the area under its P-R curve. Finally, mAP is obtained by averaging AP values across all classes. In our experiments with V2X-Real dataset, we report two variants: $\text{mAP}@{50}$ uses a single IoU threshold of 0.5, following the PASCAL VOC convention, while $\text{mAP}@{30}$ uses a threshold of 0.3, which is more suitable for collaborative perception scenarios where localization requirements may be relaxed due to communication constraints. These metrics reflect both the detection accuracy across object categories and the localization precision of the generated scenes.

\noindent\textbf{Fr\'echet Inception Distance (FID):} FID~\citep{heusel2017gans} is a widely adopted metric for evaluating generative models by measuring the statistical distribution discrepancy between generated and real images in feature space. The computation involves two stages: feature extraction and distribution distance calculation. First, FID extracts deep feature representations $f_{\text{gen}}$, $f_{\text{gt}}$ from generated and ground truth images using a pre-trained Inception-v3 network. Then, it computes the mean $\{\mu_{\text{gen}}, \mu_{\text{gt}}\}$ and covariance matrices $\{\Sigma_{\text{gen}}, \Sigma_{\text{gt}}\}$ of these features. The FID score is calculated as:
\begin{equation}
\text{FID} = \|\mu_{\text{gen}} - \mu_{\text{gt}}\|^2 + \text{Tr}\left(\Sigma_{\text{gen}} + \Sigma_{\text{gt}} - 2(\Sigma_{\text{gen}}\Sigma_{\text{gt}})^{1/2}\right),
\end{equation}
where $\|\cdot\|^2$ denotes the squared Euclidean norm and $\text{Tr}(\cdot)$ is the matrix trace operator. Lower FID indicates better generation quality with closer distribution alignment to real data.

\noindent\change{\textbf{CLIP Similarity:} To evaluate the semantic consistency of jointly observed objects across different agents, we first extract image crops of the shared objects using projected 2D bounding boxes, then encode these crops with the pre-trained CLIP ViT-B/32 model \citep{pmlr-v139-radford21a}. CLIP Similarity is computed as the cosine similarity between the averaged embeddings of the same object from two agents. Higher values indicate better cross-agent appearance consistency.}

\noindent\change{\textbf{Mean Reciprocal Rank (MRR):} CLIP Similarity only measures pairwise agreement for matched object pairs. To further evaluate whether the correct cross-agent pairs can be distinguished from all other shared objects in the same scene, we formulate a retrieval task. For each shared object crop from Agent A, we rank all shared object crops from Agent B by CLIP cosine similarity, and record the reciprocal rank of the true match. MRR averages this reciprocal rank over all shared objects. A higher MRR means the correct cross-agent match is ranked closer to the top more consistently.}

\noindent\change{\textbf{Top-1 Accuracy:} Based on the same retrieval protocol as MRR, Top-1 Accuracy measures the fraction of queries for which the correct cross-agent object is ranked first among all candidate shared objects from the other agent. Compared with CLIP Similarity and MRR, Top-1 is a stricter metric, since it requires exact rank-1 retrieval rather than just high similarity or near-top ranking. Higher Top-1 Accuracy indicates more reliable instance-level consistency across agents.}

\subsection{More Image Generation Results}
\label{subsec:more_gen}

\noindent\textbf{Random Seeds and Text-Free Generation.}
By changing random seeds or removing text prompts, our \texttt{V2VCrafter} can generate diverse CAD scenes while preserving cross-agent consistency on jointly observed objects. Fig.~\ref{fig:random_seed} shows that varying seeds changes foreground attributes and background appearance across CAV 1 and CAV 2, yet jointly observed objects marked by $\mathbf{V}^*$ still maintain identical appearance across agent perspectives. Fig.~\ref{fig:text_free} further demonstrates text-free conditional generation: even without text prompts, the model still synthesizes diverse object appearances and scene backgrounds under fixed geometric conditions.

\begin{figure*}[t]
    \vspace{-2mm}
    \centering
    \includegraphics[width=1.0\textwidth]{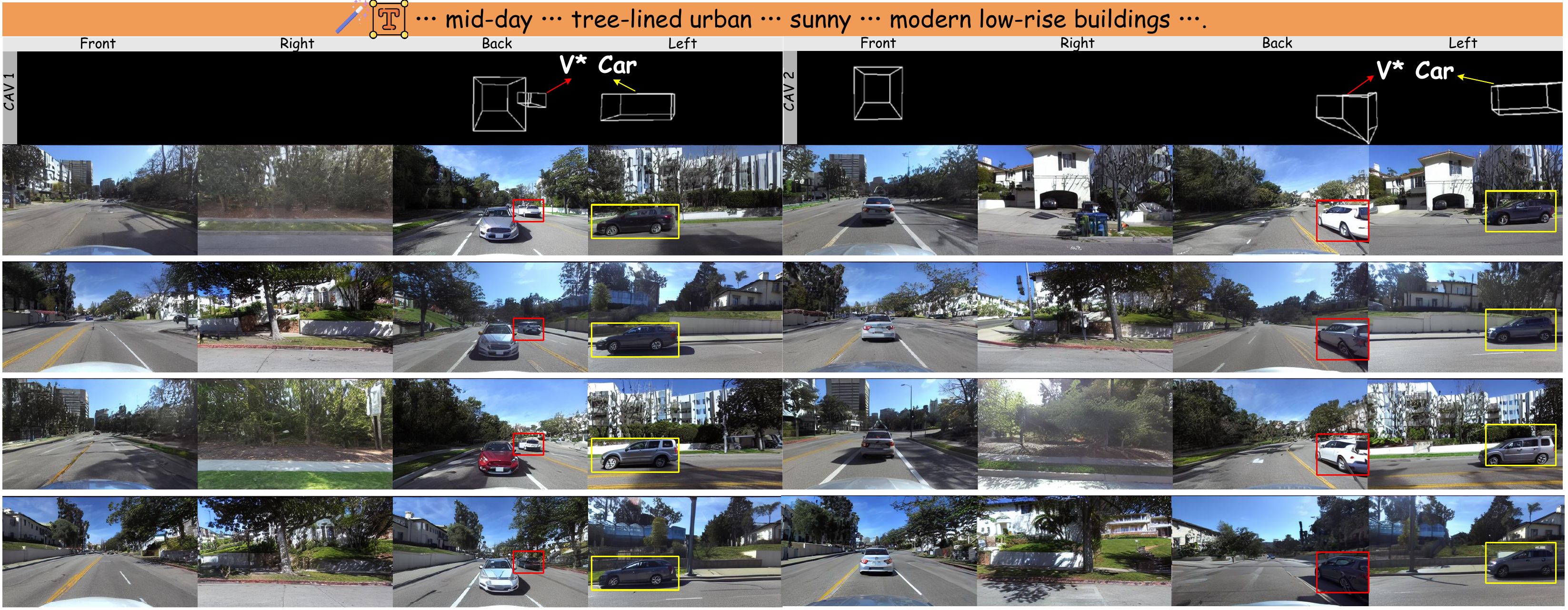}
    \vspace{-5mm}
    \caption{\textbf{Visualization of generated images with different random seeds}. With fixed text prompts and 3D bounding box conditions, varying random seeds produces diverse foreground object attributes and background appearance across CAV 1 and CAV 2. However, jointly observed objects marked by $\mathbf{V}^*$ consistently maintain identical visual appearance across agents.}
    \label{fig:random_seed}
    \vspace{-2mm}
\end{figure*}

\noindent\change{\textbf{More Baseline Comparison.} We supplement the main paper (Fig.~\ref{fig:gen_baselines}) with qualitative comparisons against MagicDrive-V2-X in Fig.~\ref{fig:magicdrive}. Note that MagicDrive-V2 is originally designed for video generation. We adapt it for single-frame synthesis by removing temporal layers while preserving its spatial control capabilities, ensuring the same generation resolution as our method. Although MagicDrive-V2-X generates plausible scenes, it fails to maintain cross-agent consistency for jointly observed objects, leading to conflicting semantic features. This visual discrepancy aligns with its lower CLIP similarity in Table~\ref{tab:main_results}, validating the necessity of our cross-agent attention for coherent multi-agent generation.}

\begin{figure}[t]
    \centering
    \includegraphics[width=1\columnwidth]{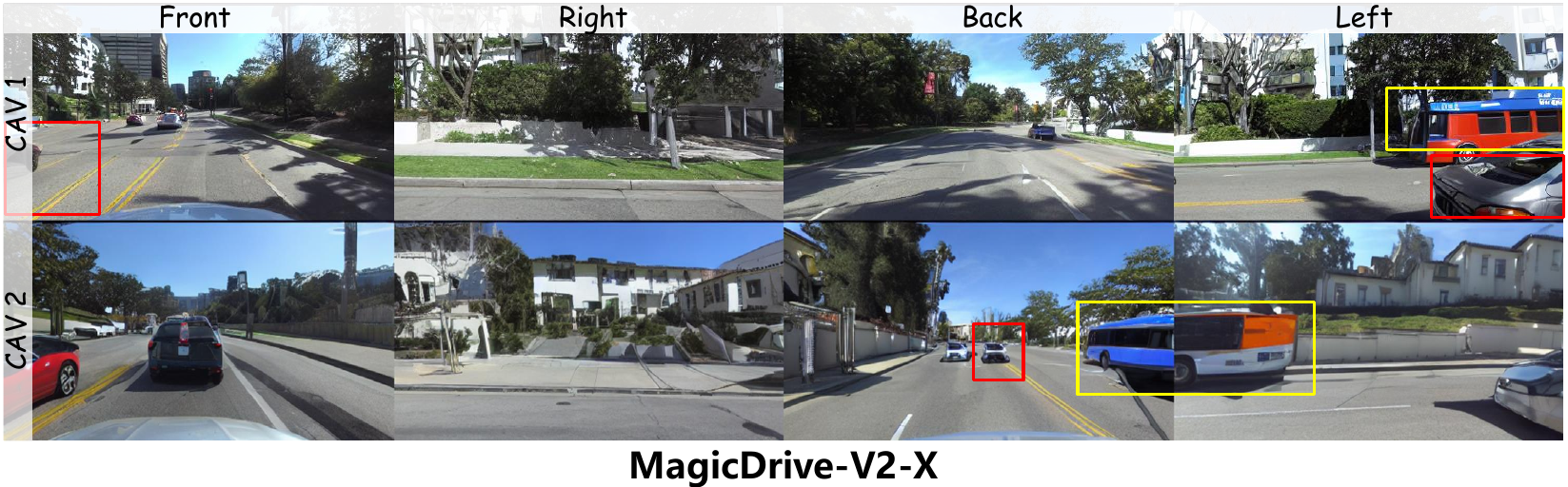}
    \vspace{-7mm}
    \caption{\textbf{Qualitative results of MagicDrive-V2-X}. Rectangles with the same color denote the same observed area.}
    \label{fig:magicdrive}
    \vspace{-6mm}
\end{figure}

\begin{figure*}[htbp]
    \vspace{-5mm}
    \centering
    \includegraphics[width=1.0\textwidth]{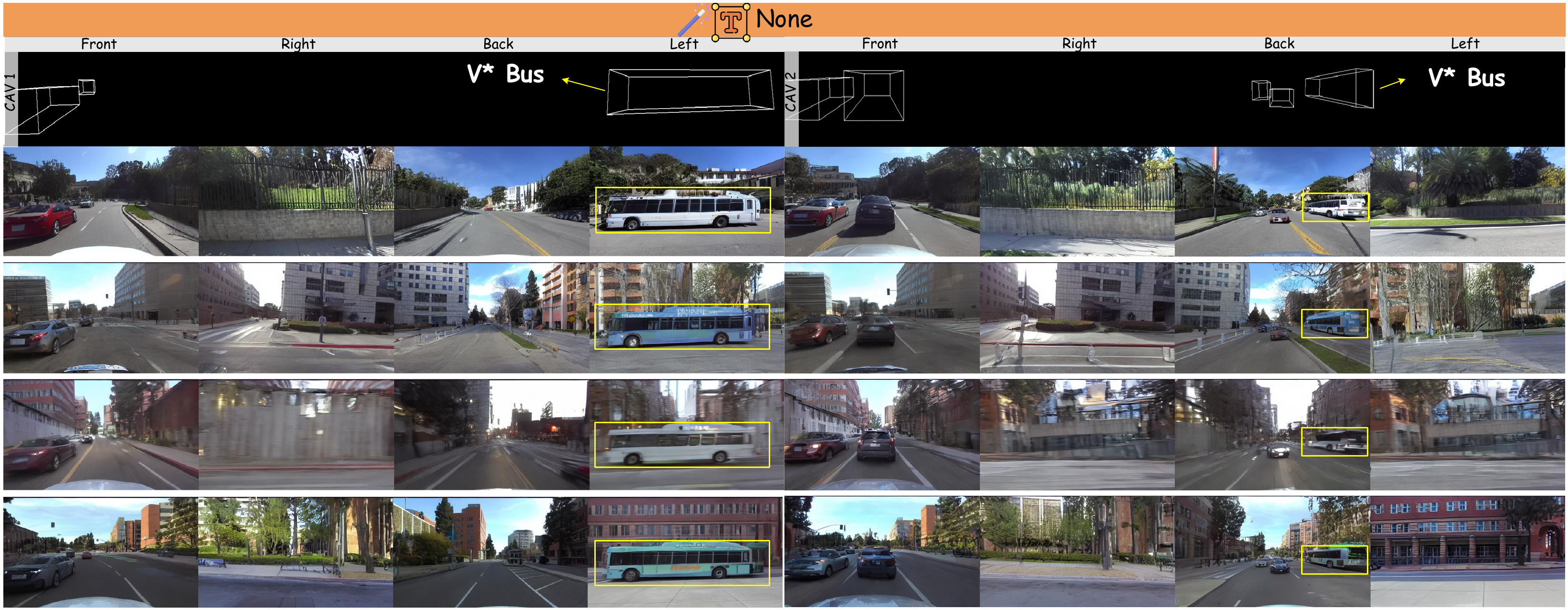}
    \vspace{-5mm}
    \caption{\textbf{Text-free conditional generation results}. By removing text prompts while keeping geometric conditions (3D boxes, BEV maps, camera poses), the model generates diverse appearances for both foreground objects (\textcolor{yellow}{yellow} boxes highlight the jointly observed $\mathbf{V}^*$ bus) and background scenes across two vehicles' multi-view observations, demonstrating the model's capability to synthesize varied physical attributes under identical layout constraints.}
    \label{fig:text_free}
    \vspace{-2mm}
\end{figure*}

\subsection{More Data Augmentation Results}
\label{subsec:more_collab}

\noindent \textbf{Camera+LiDAR Augmentation Results.}
Tables~\ref{tab:aug_cp} and \ref{tab:perclass_aug_cl} show that Camera+LiDAR augmentation consistently improves collaborative detection across different CP models, distance ranges, and object categories. The gains are particularly clear at longer ranges and for less frequent categories such as Van, Bus, and Truck, where sparse observations and limited data make learning more difficult. This suggests that the generated images provide complementary appearance diversity while preserving the geometric structure required by LiDAR-based fusion, making them especially valuable for harder cases without harming the overall detection behavior.

\begin{table}[t]
    \centering
    \scriptsize
    \caption{\textbf{Camera+LiDAR augmentation results.} \colorbox{red!20}{\rule{0pt}{0.7ex}\hspace{0.8em}} and \colorbox{yellow!20}{\rule{0pt}{0.7ex}\hspace{0.8em}} mark the largest and second-largest gains among \textit{+Gen Data} entries in the range-based columns.}
    \vspace{-3mm}
    \label{tab:aug_cp}
    \resizebox{\linewidth}{!}{
    \renewcommand{\arraystretch}{1.15}%
    \setlength{\tabcolsep}{3.2pt}
    \begin{tabular}{llcccc}
        \toprule
        \multirow{2}{*}[-0.6ex]{\textbf{CP Model}} & \multirow{2}{*}[-0.6ex]{\textbf{Data}} & \multicolumn{4}{c}{\textbf{Range-Based Results (mAP@30/50)} $\uparrow$} \\
        \cmidrule(lr){3-6}
         &  & \textbf{Overall} & \textbf{0-30m} & \textbf{30-50m} & \textbf{50-100m} \\
        \midrule
         & Real Only & 38.60 / 32.18 & \change{47.85 / 40.25} & \change{30.92 / 25.18} & \change{17.64 / 15.82} \\
        AttnFuse & \textit{+Gen Data} & \cellcolor{yellow!20}\textbf{41.71 / 34.27} & \change{\cellcolor{yellow!20}\textbf{50.14 / 42.08}} & \change{\cellcolor{yellow!20}\textbf{34.86 / 27.53}} & \change{\cellcolor{yellow!20}\textbf{21.75 / 18.29}} \\
         & Gain & \textcolor{green!60!black}{+3.11 / +2.09} & \change{\textcolor{green!60!black}{+2.29 / +1.83}} & \change{\textcolor{green!60!black}{+3.94 / +2.35}} & \change{\textcolor{green!60!black}{+4.11 / +2.47}} \\
        \midrule
         & Real Only & 44.15 / 38.45 & \change{54.82 / 46.53} & \change{35.41 / 29.54} & \change{20.23 / 19.52} \\
        V2VNet & \textit{+Gen Data} & \textbf{46.02 / 39.21} & \change{\textbf{56.19 / 47.08}} & \change{\textbf{37.72 / 30.56}} & \change{\textbf{22.65 / 20.69}} \\
         & Gain & \textcolor{green!60!black}{+1.87 / +0.76} & \change{\textcolor{green!60!black}{+1.37 / +0.55}} & \change{\textcolor{green!60!black}{+2.31 / +1.02}} & \change{\textcolor{green!60!black}{+2.42 / +1.17}} \\
        \midrule
         & Real Only & 29.84 / 20.75 & \change{37.12 / 25.53} & \change{23.94 / 16.52} & \change{13.53 / 10.51} \\
        Late Fusion & \textit{+Gen Data} & \cellcolor{red!20}\textbf{33.48 / 22.77} & \change{\cellcolor{red!20}\textbf{39.91 / 27.19}} & \change{\cellcolor{red!20}\textbf{28.37 / 18.84}} & \change{\cellcolor{red!20}\textbf{17.86 / 13.14}} \\
         & Gain & \textcolor{green!60!black}{+3.64 / +2.02} & \change{\textcolor{green!60!black}{+2.79 / +1.66}} & \change{\textcolor{green!60!black}{+4.43 / +2.32}} & \change{\textcolor{green!60!black}{+4.33 / +2.63}} \\
        \bottomrule
    \end{tabular}
    }
\end{table}

\begin{table}[t]
    \centering
    \scriptsize
    \caption{\textbf{Per-class Camera+LiDAR augmentation results.}}
    \vspace{-3mm}
    \label{tab:perclass_aug_cl}
    \resizebox{\linewidth}{!}{
    \renewcommand{\arraystretch}{1.1}%
    \setlength{\tabcolsep}{3.0pt}
    \begin{tabular}{lcccccc}
        \toprule
        \textbf{Data} & \textbf{Overall (mAP@30/50)} & \textbf{Car} & \textbf{Van} & \textbf{Bus} & \textbf{Truck} & \textbf{Pedestrian} \\
        \midrule
        Real Only & 38.60 / 32.18 & 59.18 / 55.82 & 19.35 / 16.48 & 24.52 / 21.17 & 26.74 / 23.18 & 32.64 / 18.35 \\
        +\texttt{V2VCrafter} & \textbf{41.71 / 34.27} & \textbf{61.24 / 57.74} & \textbf{23.18 / 19.74} & \textbf{28.15 / 24.52} & \textbf{30.12 / 26.48} & \textbf{34.18 / 19.82} \\
        \midrule
        Gain & \textcolor{green!60!black}{+3.11 / +2.09} & \textcolor{green!60!black}{+2.06 / +1.92} & \textcolor{green!60!black}{+3.83 / +3.26} & \textcolor{green!60!black}{+3.63 / +3.35} & \textcolor{green!60!black}{+3.38 / +3.30} & \textcolor{green!60!black}{+1.54 / +1.47} \\
        \bottomrule
    \end{tabular}
    }
    \vspace{-2mm}
\end{table}

\refstepcounter{table}\label{tab:jointly_observed_ratio}
\begin{wraptable}{r}{0.48\textwidth}
    \centering
    \vspace{-2mm}
    {\scriptsize
    \renewcommand{\arraystretch}{1.1}%
    \setlength{\tabcolsep}{3.2pt}
    \resizebox{\linewidth}{!}{
    \begin{tabular}{ll|c}
        \toprule
        \textbf{CP Model} & \textbf{Training Data} & \textbf{Jointly Observed AP@30/50}$\uparrow$ \\
        \midrule
        \multirow{4}{*}[-0.5ex]{AttnFuse} & Real Only (0\%) & 33.96 / 20.15 \\
         & +Gen Data (2\%) & 36.45 / 21.36 {\scriptsize \textcolor{green!60!black}{(+2.49 / +1.21)}} \\
         & +Gen Data (4\%) & \cellcolor{yellow!20}37.84 / 21.98 {\scriptsize \textcolor{green!60!black}{(+3.88 / +1.83)}} \\
         & +Gen Data (6\%) & \cellcolor{red!20}38.11 / 22.15 {\scriptsize \textcolor{green!60!black}{(+4.15 / +2.00)}} \\
        \bottomrule
    \end{tabular}
    }}
    \caption*{\textbf{Table~\thetable.} Impact of jointly observed object ratio on camera-only collaborative detection in the Basic subset. \colorbox{red!20}{\rule{0pt}{0.7ex}\hspace{0.8em}} and \colorbox{yellow!20}{\rule{0pt}{0.7ex}\hspace{0.8em}} mark the largest and second-largest gains among \textit{+Gen Data} entries.}
    \vspace{-2mm}
\end{wraptable}

\noindent\textbf{Jointly Observed Object Augmentation.}
The original training set contains only a small proportion of jointly observed objects. We therefore vary this ratio during camera-only augmentation and evaluate jointly observed detection on the Basic subset. Table~\ref{tab:jointly_observed_ratio} shows a clear positive correlation between the jointly observed ratio and jointly observed mAP, indicating that our generated samples preserve cross-agent consistency effectively. The key insight is that increasing the ratio of consistent cross-agent observations provides stronger supervision for learning feature alignment required by collaborative BEV detection.

\section{Extension to Video Generation}
\label{app:video_generation}

Inspired by recent works on extending street-view generation to driving videos~\citep{gao2023magicdrive, gao2025magicdrive-v2}, we extend \texttt{V2VCrafter} to multi-agent video generation by introducing spatio-temporal attention, retaining cross-agent attention at keyframes, and adding a lightweight video fine-tuning stage on short clips from V2X-Real. Multi-agent geometric conditions and the learnable modifier $\mathbf{V}^*$ are injected at keyframes to preserve consistent jointly observed objects across agents, while temporal attention interpolates the intermediate frames. As shown in Fig.~\ref{fig:video_generation}, this design already supports temporally coherent short multi-agent video generation, but long and consitent video generation is still limited by the relatively small video training set and therefore is not yet as stable as in the image setting.

\begin{center}
    \centering
    \includegraphics[width=1.0\textwidth]{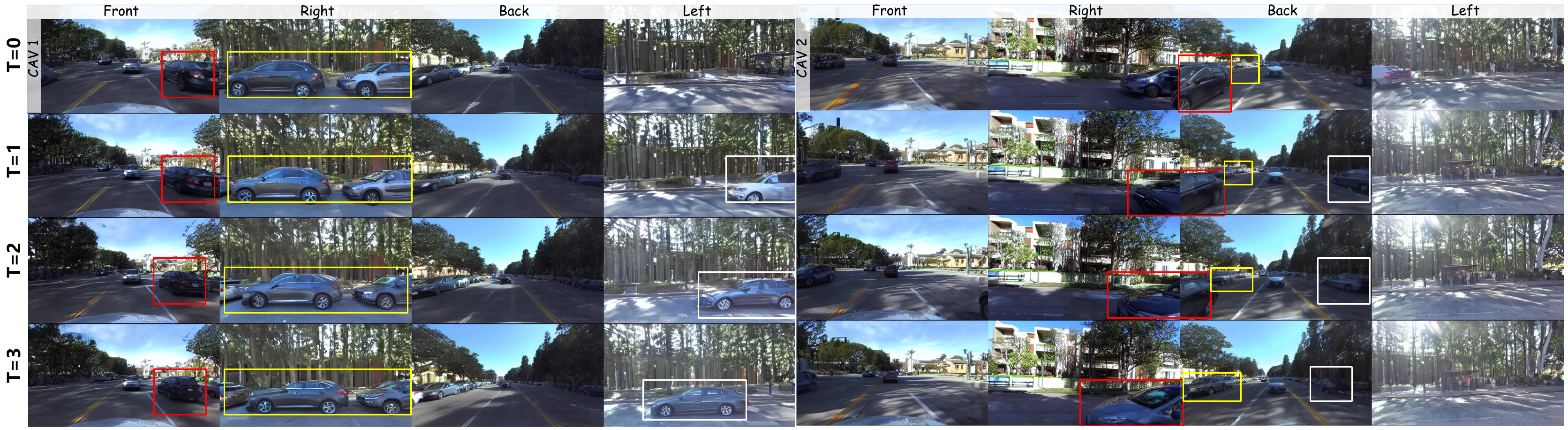}
    \vspace{-5mm}
    \captionof{figure}{\textbf{The video generation results of \texttt{V2VCrafter}}. We show the generation results of \texttt{V2VCrafter} on the V2X-Real dataset. The first and last frames are provided with 3D bounding box annotations, and the intermediate frames are generated by the model. The cross-agent jointly observed areas are highlighted with the same color.}
    \label{fig:video_generation}
    \vspace{-6mm}
\end{center}

\section{More Details of Data Preprocessing}
\label{app:data_preprocessing}
To prepare the data for training \texttt{V2VCrafter}, we perform five main preprocessing steps on V2X-Real: (1) scene filtering to select high-quality multi-vehicle scenarios, (2) pose-difficulty stratification for camera-only augmentation analysis, (3) FPV mask generation for fine-grained object-level control, (4) BEV map generation from LiDAR point clouds using OpenCOOD~\citep{xuOPV2VOpenBenchmark2022}, and (5) text description generation via Qwen2.5-VL-7B-Instruct~\citep{bai2025qwen25vltechnicalreport}. This section provides detailed technical descriptions of each preprocessing component.

\subsection{Scene Filtering}
\label{subsec:scene_filtering}
We focus exclusively on V2V data and exclude views from stationary RSUs, as RSU perspectives exhibit limited diversity and their image style does not align with the CAV viewpoint. To ensure meaningful cross-agent generation, we retain only scenes containing at least two valid CAV streams and further filter frames to require complete sensor data, including cameras, LiDAR, and well-formed YAML annotations, for both CAVs. We additionally apply temporal consistency checks with a fixed load interval to discard isolated frames and preserve continuous multi-agent driving sequences.

\subsection{Pose-Difficulty Stratification}
\label{subsec:pose_difficulty}
For the camera-only experiments in the main paper, we further stratify the filtered dataset into Basic, Moderate, and Hard subsets to reflect different levels of pose difficulty. The split is obtained from a sample-level score computed from information already provided by V2X-Real, including annotated 3D boxes, camera calibration, ego poses, and LiDAR. Specifically, we project each 3D box into all camera views to estimate the observation distance of jointly observed objects and their visibility degradation from truncation and overlap, and we use LiDAR together with the ego pose to estimate local road-slope variation. These normalized cues are averaged into a single score, where lower, intermediate, and higher scores correspond to the Basic, Moderate, and Hard subsets, respectively.

\subsection{FPV Mask Generation}
\label{subsec:fpv_mask_generation}
We generate camera-specific FPV semantic masks by projecting each annotated 3D bounding box into every camera view and rasterizing the visible polygon region with its semantic class ID. After obtaining the 2D projected corners, we use OpenCV's \texttt{convexHull} and \texttt{fillPoly} functions to produce pixel-accurate masks for the visible object regions. For each CAV, the masks from its four camera views are stacked into a 4-channel array, which allows the model to process multi-view object-level control signals efficiently during training.

\subsection{BEV Map Generation}
\label{subsec:bev_generation}
The original V2X-Real dataset does not provide BEV semantic maps required for spatial layout control. We generate these maps by rendering LiDAR point clouds and 3D bounding box annotations into bird's-eye view representations using tools from OpenCOOD~\citep{xuOPV2VOpenBenchmark2022}.

\noindent\textbf{Coordinate Transformation.} The generation process begins with transforming 3D point clouds from the vehicle's local coordinate frame to a standardized BEV coordinate system. Given a LiDAR point cloud $\mathcal{P} = \{p_i \in \mathbb{R}^3\}$ and the vehicle's 6-DOF pose $(x, y, z, \text{roll}, \text{yaw}, \text{pitch})$, we first construct the LiDAR-to-world transformation matrix $\mathbf{T}_{\text{L2W}} \in SE(3)$ by composing rotation matrices for yaw, pitch, and roll with the translation vector. The world-to-BEV transformation then projects points onto the ground plane while centering the coordinate system at the ego vehicle's position.

\noindent\textbf{Spatial Discretization.} We adopt a 200$\times$200 pixel BEV grid covering a spatial range of [-50m, 50m] in both X and Y directions, yielding a resolution of 0.5m per pixel. For each LiDAR point $p_i = (x_i, y_i, z_i)$, we compute its grid coordinates as:
\begin{equation}
u_i = \lfloor (x_i - x_{\min}) / \Delta \rfloor, \quad v_i = \lfloor (y_i - y_{\min}) / \Delta \rfloor,
\end{equation}
where $x_{\min} = y_{\min} = -50$m and $\Delta = 0.5$m is the spatial resolution.

\noindent\textbf{BEV Rendering.} We render the BEV map using the OpenCOOD visualization utilities with offline rendering to ensure consistent quality. Specifically, we use the \texttt{visualize\_single\_sample\_dataloader} function to process LiDAR points and 3D bounding boxes into Open3D geometries. The rendering pipeline includes: (1) colorizing point clouds based on intensity values or height information via the \texttt{color\_mode} parameter, (2) projecting 3D bounding boxes onto the ground plane as wireframe line sets, and (3) rendering the scene from a top-down view using either Open3D's \texttt{OffscreenRenderer} (with camera positioned at [0, 0, 50] looking down at [0, 0, 0]) or matplotlib's scatter plot backend for headless server environments. The final output is a 200$\times$200 RGB image where point clouds appear as colored dots and vehicle bounding boxes as red wireframes, providing clear spatial layout information for the diffusion model.

\subsection{Driving Scene Description Generation}
\label{subsec:qwen}
We use \textbf{Qwen2.5-VL-7B-Instruct} to generate hierarchical driving-scene descriptions for V2X-Real, with the prompt design illustrated in Fig.~\ref{fig:gen_scene_descriptions}. For each sample, we first generate a scene-level description from images sampled across the scene to summarize relatively stable attributes such as weather, time of day, and road environment, and then generate a frame-level description from the current multi-view images to capture dynamic objects and local traffic interactions. The final text prompt follows the template \verb|"{scene_description} {frame_description}"|, which combines global context with frame-specific details for controllable generation.

\section{More Discussion}
\label{app:more_discussion}

\subsection{Potential Applications}

\noindent\change{Beyond data augmentation, \texttt{V2VCrafter} can support several broader applications. Its ability to synthesize cross-agent consistent observations makes it useful for realistic V2X driving simulation on top of lightweight simulators such as CARLA, for providing multi-view supervision to V2X-oriented 3D reconstruction such as feedforward 3DGS, and for building predictive world models for collaborative planning~\citep{10107755, 10437455, 10614333, Min_2024_CVPR, Zheng_2025_ICCV}. These directions all benefit from the same core capability: controllable and spatially consistent generation across multiple agents.}

\begin{figure*}[htbp]
    \raggedright
    \begin{tcolorbox}[
        enhanced,
        width=\textwidth,
        colback=white,
        colframe=blue!40!gray,
        colbacktitle=blue!35!gray,
        coltitle=white,
        title=Frame-Level Description Prompt,
        fonttitle=\bfseries\large,
        halign title=flush left,
        boxrule=1pt,
        arc=3mm,
        left=4mm,
        right=4mm,
        top=2mm,
        bottom=2mm,
        before upper=\raggedright,
        drop shadow=black!12!white
    ]
    \begin{lstlisting}[style=mystyle, escapeinside={@}{@}, breakindent=0pt, breakautoindent=false]
Describe the visible objects and their immediate traffic situation in this frame from vehicle {vehicle_id}. Focus on the actions and positions of cars, trucks, and pedestrians. Generate a description of exactly 20-25 English words.

@\textbf{Objects detected}@: {context_info}

Write one sentence describing the @\textbf{key objects}@ and their @\textbf{current actions}@.

@\textbf{Example}@: "A white SUV is turning left at an intersection while a red sedan waits at the traffic light and several pedestrians are crossing the street."

@\textbf{Your description}@:
\end{lstlisting}
    \end{tcolorbox}
    \vspace{2mm}
    \begin{tcolorbox}[
        enhanced,
        width=\textwidth,
        colback=white,
        colframe=blue!40!gray,
        colbacktitle=blue!35!gray,
        coltitle=white,
        title=Scene-Level Description Prompt,
        fonttitle=\bfseries\large,
        halign title=flush left,
        boxrule=1pt,
        arc=3mm,
        left=4mm,
        right=4mm,
        top=2mm,
        bottom=2mm,
        before upper=\raggedright,
        drop shadow=black!12!white
    ]
    \begin{lstlisting}[style=mystyle, escapeinside={@}{@}, breakindent=0pt, breakautoindent=false]
Describe the overall style of this traffic scene in exactly 25-30 English words, based on the provided images from different times.

Summarize the general @\textbf{road type}@, @\textbf{weather conditions}@, @\textbf{time of day}@, and @\textbf{surrounding environment}@.

@\textbf{Example}@: "A daytime scene on a multi-lane urban road under clear, sunny skies, surrounded by modern high-rise buildings."

@\textbf{Your description}@:
\end{lstlisting}
    \end{tcolorbox}
    \caption{The structured prompts used for generating frame-level (top) and scene-level (bottom) descriptions. These prompts guide the Qwen2.5-VL-7B-Instruct model to produce consistent and relevant textual data.}
    \label{fig:gen_scene_descriptions}
\end{figure*}

\end{document}